\documentclass{article}

\usepackage[preprint]{neurips_2026}

\usepackage[utf8]{inputenc} 
\usepackage[T1]{fontenc}    
\usepackage{hyperref}       
\usepackage{url}            
\usepackage{booktabs}       
\usepackage{amsfonts}       
\usepackage{nicefrac}       
\usepackage{microtype}      
\usepackage[table]{xcolor}
\usepackage{graphicx}
\usepackage{wrapfig}
\usepackage{amsmath,bm}
\usepackage{multirow}
\usepackage{amssymb}
\usepackage{tcolorbox}
\usepackage{caption}
\usepackage{subcaption}
\usepackage{amsthm}
\usepackage{enumitem}
\usepackage{tcolorbox}
\usepackage{placeins}
\tcbuselibrary{breakable}

\theoremstyle{plain}
\newtheorem*{proposition*}{Proposition}
\newtheorem*{theorem*}{Theorem}
\newtheorem*{lemma*}{Lemma}
\newtheorem*{corollary*}{Corollary}

\newtheorem{theorem}{Theorem}[section]
\newtheorem{proposition}[theorem]{Proposition}
\newtheorem{lemma}[theorem]{Lemma}

\definecolor{ourblue}{RGB}{230, 240, 252}

\newcommand{\appimgTwo}[1]{%
  \begin{subfigure}[t]{0.5\textwidth}
    \centering
    \includegraphics[width=\linewidth]{figures/appendix/#1.jpg}
  \end{subfigure}%
}
\title{Probability-Conserving Flow Guidance}

\author{%
  Parsa Esmati\thanks{Equal contribution.} \\
  University of Bristol\\
  \And
  Junha Hyung\footnotemark[1] \\
  KAIST\\
  \And
  Amirhossein Dadashzadeh \\
  University of Bristol\\
  \And
  Jaegul Choo \\
  KAIST\\
  \And
  Majid Mirmehdi \\
  University of Bristol\\
}

\begin{document}

\maketitle

\vspace{-1.0em}  
\begin{center}
  \includegraphics[width=0.85\textwidth]{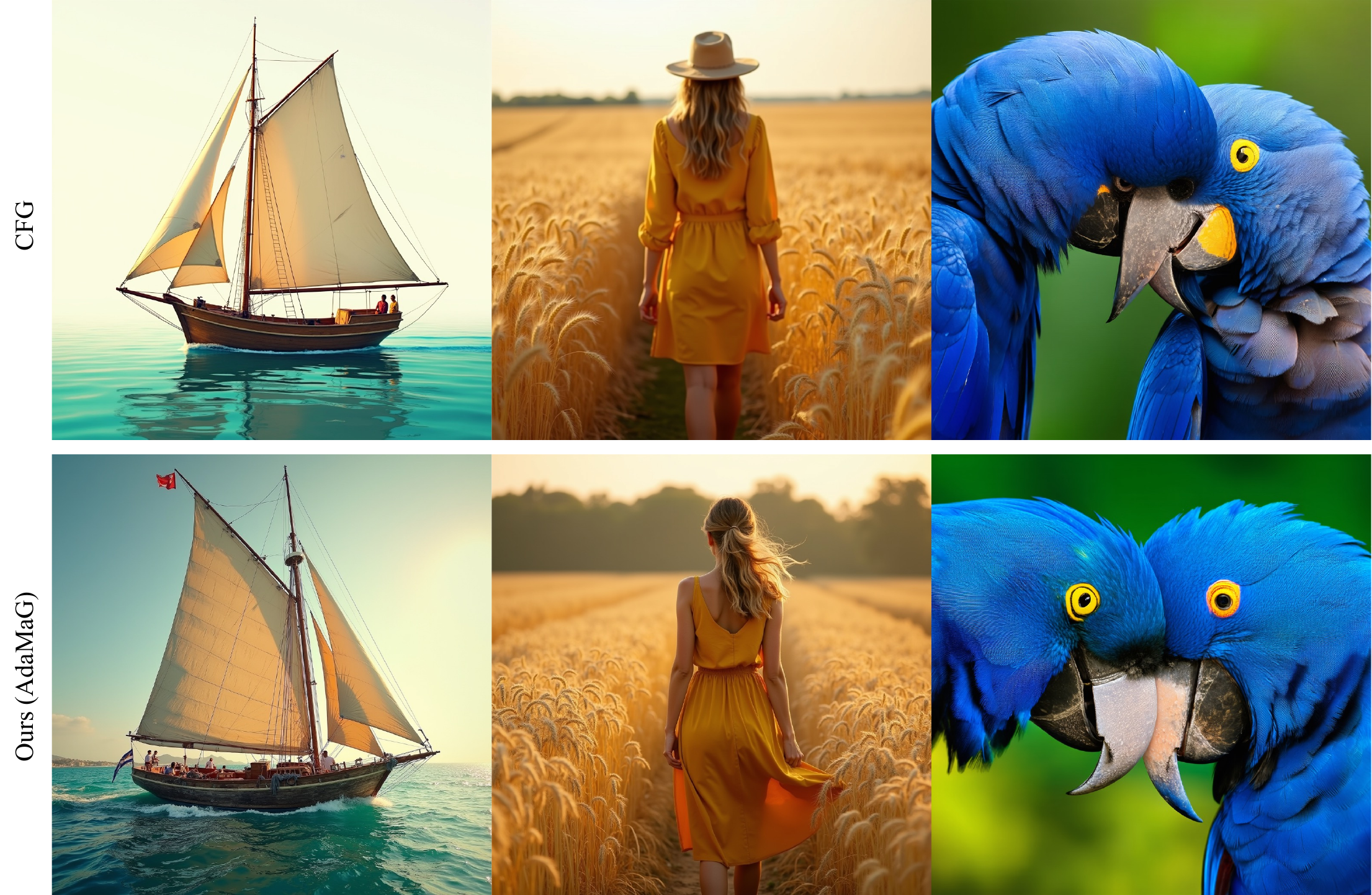}
    \captionof{figure}{\textbf{AdaMaG vs.\ CFG.} CFG (top) vs AdaMaG (bottom), same prompts and seeds. CFG shows saturation and hallucinated artefacts that grow with guidance scale; AdaMaG restores probability conservation along trajectories, yielding clean, on-manifold generations with no inference overhead.}
  \label{fig:teaser}
\end{center}

\maketitle

\begin{abstract}

Diffusion and flow-based generative models dominate visual synthesis, with guidance aligning samples to user input and improving perceptual quality. However, Classifier-Free Guidance (CFG) and extrapolation-based methods are heuristic linear combinations of velocities/scores that ignore the generative manifold geometry, breaking probability conservation and driving samples off the learned manifold under strong guidance. We analyse guidance through the continuity equation and show its effect decomposes into a divergence term and a score-parallel term defined invariantly across parameterisations. We prove the divergence term blows up structurally as sampling approaches the data manifold, motivating a time-dependent schedule alongside score-parallel attenuation. The resulting plug-and-play rule, Adaptive Manifold Guidance (AdaMaG), bounds both terms at no additional inference cost. Finally, we show that most empirical heuristics for reducing saturation or improving generation quality correspond directly to the two terms in our decomposition. Across image generation benchmarks, AdaMaG improves realism, reduces hallucinations, and induces controlled desaturation in high-guidance regimes. 
\end{abstract}

\section{Introduction}

\begin{wrapfigure}{r}{0.5\columnwidth}
  \vspace{-1.2em}
  \centering
  \includegraphics[width=0.5\columnwidth]{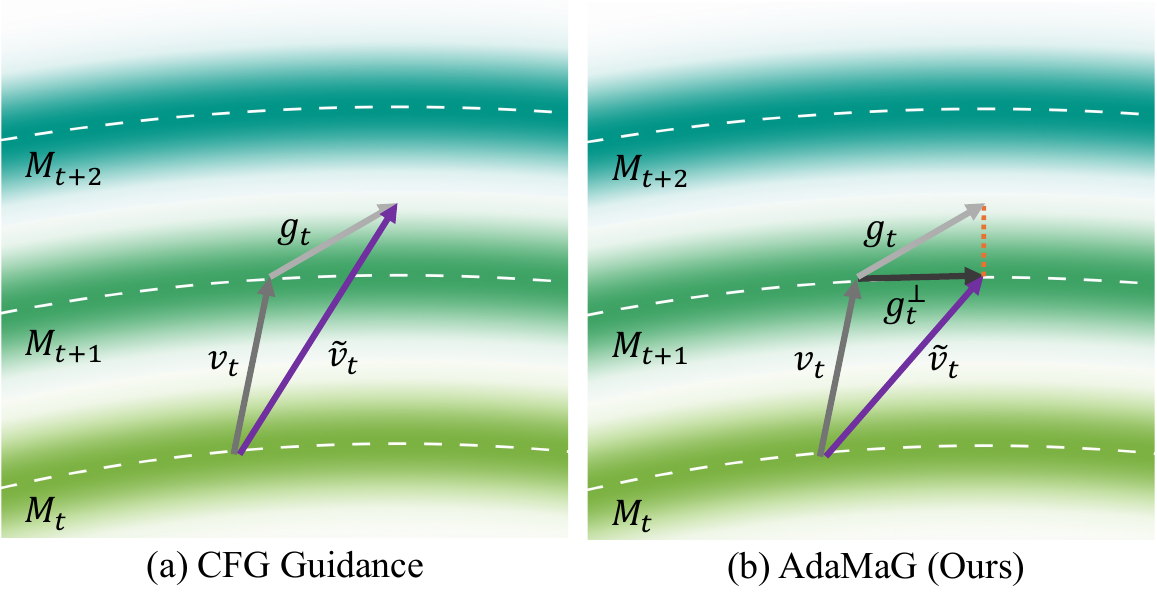}
\caption{\textbf{Conceptual overview.} Unlike CFG (a), which extrapolates from the unconditional field and drifts off the manifold $M_t$, AdaMaG (b) attenuates the score-parallel component of guidance and applies a time-dependent schedule, keeping trajectories on-manifold at no extra cost.}
  \vspace{-1em}
  \label{fig:conceptual}
\end{wrapfigure}

We consider the problem of guided sampling from high-dimensional data distributions with diffusion and flow-matching models, where the generative process is realised by integrating learned ordinary differential equations (ODEs) or stochastic differential equations (SDEs) defined by a time-dependent velocity (or score) field \citep{song2021scorebased,lipman2022flow,Rombach_2022_CVPR,albergo2023building,esser2024scalingrectifiedflowtransformers,wan2025wanopenadvancedlargescale}.

Classifier-free guidance (CFG)~\citep{ho2021classifierfree}  and its variants have become the de facto standard for steering samples toward the high support conditional regions by linearly extrapolating the conditional prediction away from the unconditional using a guidance scale \citep{ho2021classifierfree,chung2024cfg}. Such extrapolation has been shown to improve perceptual quality, producing sharper and more detailed samples with stronger conditional alignment. Yet, applying CFG, and related autoguidance methods~\citep{Hong_2023_ICCV,karras2024guiding,hong2024smoothed,Hyung_2025_CVPR,ifriqi2025entropy}, violates the learnt probability conservation during sampling, leading to artifacts and oversaturation that grow more severe as the guidance scale increases.

We first revisit guidance from the perspective of conservation of {probability} and the model’s learned generative manifold. In the ideal continuous-time formulation, the unconditional and conditional velocity fields are trained to satisfy a continuity equation that approximately preserves probability mass along trajectories, but the linear combinations used in current guidance mechanisms break this structure at sampling time, introducing unconstrained divergences and off-manifold drifts. We formalise the contribution of a guidance term to the continuity equation and show that it decomposes into two components: (i) a divergence term that locally creates or removes probability mass, and (ii) a score-parallel flux that transports mass across iso-density surfaces. When these contributions cancel, guidance is conservative and offers little fine-grained control; when their sum grows unchecked, trajectories depart the model’s manifold.

Motivated by this analysis and to keep the model bounded to the conservation of {probability}, we introduce \emph{Adaptive Manifold Guidance (AdaMaG)}, a plug-and-play modification to the sampler that treats the conditional velocity as a primary, approximately probability-preserving flow, and restricts guidance to act predominantly along its orthogonal directions. At each step, we decompose the guidance term into components parallel and orthogonal to the primary score direction, preserve the orthogonal updates, and modulate tangential guided corrections. This construction strengthens conditional alignment while regularising the geometry of the update and mitigating drift away from the model’s learned manifold without additional function evaluations.

We further prove that this late-time divergence spike is structurally inevitable, blowing up as sampling approaches the data manifold at a rate determined by the conditional/unconditional posterior covariance gap. To characterise the divergence empirically along the generative path, we measure it for the primary manifold velocities and the residual guidance field. We observe that, across most denoising steps, the primary velocity dominates by orders of magnitude, with guidance divergence only becoming comparable in the final iterations as samples approach the data manifold. Consequently, we introduce a time-dependent guidance schedule that preserves strong early conditioning while attenuating guidance near the endpoint to suppress this late-time divergence spike.

In summary, our contributions are: \textbf{(i) Conservation-based view of guidance.} We analyse guidance as a source term in the continuity equation and identify two components, a divergence and a score-parallel flux, that violate probability conservation, and show that prior manifold-preserving methods fall under this framework. \textbf{(ii) Adaptive Manifold Guidance.} A plug-and-play sampler that attenuates the score-parallel term and schedules guidance to suppress late-stage divergence, with no extra function evaluations. \textbf{(iii) Empirical evaluation.} On SD3, SD3.5, and Flux, we conduct comprehensive quantitative and qualitative evaluation and show AdaMaG consistently improves FID, IS, and saturation across guidance scales, with ablations isolating each component's contribution.

\section{Related works}
We review guidance methods in diffusion and flow models, and then consider recent manifold-aware sampling rules most relevant to our approach.

\textbf{Guidance in diffusion and flows.}
Diffusion and flow-matching models generate samples by integrating a learned time-dependent score or velocity field from noise to data \citep{pmlr-v37-sohl-dickstein15,NEURIPS2020_4c5bcfec,song2021scorebased,lipman2022flow,albergo2023building}. To steer these models toward desired conditions, guidance terms were added at sampling time. Early work used classifier-based guidance, augmenting the score with the gradient of a separately trained classifier \citep{NEURIPS2021_49ad23d1,nichol2022glidephotorealisticimagegeneration}. CFG~\citep{ho2021classifierfree} 
 replaced the external classifier with conditional and unconditional predictions and has become the de facto standard in text-to-image and text-to-video frameworks \citep{ho2021classifierfree,Rombach_2022_CVPR,balaji2023ediffitexttoimagediffusionmodels,esser2024scalingrectifiedflowtransformers,wan2025wanopenadvancedlargescale}. 

Beyond classifier-based and CFG-style methods, guidance can also be derived from CLIP, energy, or reward models, as well as training-free surrogates that approximate CFG-like behaviour without dedicated conditional/unconditional training \citep{nichol2022glidephotorealisticimagegeneration,yu2023freedomtrainingfreeenergyguidedconditional,pmlr-v202-song23k,lu2023contrastiveenergypredictionexact,sadat2025no, jang2025frameguidancetrainingfreeguidance}. However, CFG and its variants induce large mismatches between the unconditional and conditional fields, increasing the curvature of the sampling trajectory~\citep{chung2024cfg,Hyung_2025_CVPR} and pushing samples off the model’s learned manifold, which in turn reduces diversity and yields distorted or oversaturated images~\citep{sadat2024eliminating}. 

\textbf{Manifold preservation.}
Most recently, direct manifold preservation during sampling has drawn attention. Manifold-Preserving Gradient Descent (MPGD)~\citep{he2024manifold} for instance enforces manifold consistency by applying guidance on the denoised clean estimate and restricting the update to the autoencoder’s image manifold. Their method is however developed for training-free, loss-based conditioning at inference time, and thus does not directly address extrapolation based guidance methods.
CFG++~\citep{chung2024cfg} instead targets classifier-free guidance in text-conditional diffusion models, and mitigates off-manifold drift by a minimal sampling-rule change. It specifically forms the guided denoised estimate, but keeps the renoising/noise term unconditional, effectively favoring interpolation over extrapolative CFG at high scales. Although the interpolation view keeps updates bounded, it is fundamentally tied to diffusion samplers where the update decomposes into denoising and renoising terms. In the same spirit, Rectified-CFG++~\citep{saini2025rectified} extends this idea to rectified-flow models via a predictor–corrector scheme at the cost of roughly doubling the number of function evaluations. 

Characteristic~\citep{zheng2024characteristicguidancenonlinearcorrection} gives a geometric view of the score-induced Fokker-Planck dynamics and shows that linearly combining conditional and unconditional scores generally violates this non-linear PDE; they therefore add a corrective term. However, this term is obtained through an iterative approach which raises computational cost. APG~\citep{sadat2024eliminating} further reports, empirically, that the tangential component of guidance (defined with respect to the conditional approximate posterior) is a primary driver of saturation and artifacts, and proposes both tangential downscaling and an update bound to curb overshooting. 

In contrast, we view the generative manifold as the geometry learned by the model under the continuity equation, and show that guidance decomposes into two conservation-violating components whose bounding keeps trajectories on-manifold. Prior methods, including APG as a special case, fall under this decomposition.

\section{Method}
\label{sec:method}
We aim to improve generation quality and mitigate saturation by reducing off-manifold effects of guidance in conditional normalizing flow (CNF) models. We start by preliminaries, then analyse guidance through the lens of probability conservation, and finally introduce the AdaMaG update rule that follows from this analysis.
\subsection{Preliminaries}
\label{Preliminaries}
{\bf Rectified flow.}
Let $x_t \in \mathbb{R}^D$ denote the latent state at time $t \in [0,1]$, and let $y$ be the condition (with $\varnothing$ denoting the empty prompt).
We model the probability-flow dynamics with a velocity field $v_t(\cdot,y)$ parameterised by a neural network $v_\theta(x,t,y)$, and define the unconditional and conditional fields as $v_t^{u}(x) := v_\theta(x,t,\varnothing)$, and $v_t^{c}(x) := v_\theta(x,t,y)$ respectively.
Samples are then obtained by solving the probability-flow ODE. 
\begin{equation}
\frac{d x_t}{dt} = v_\theta(x_t,t,y), \qquad t \in [0,1],
\label{eq:pf-ode}
\end{equation}
initialized from $x_0 \sim p_0$.
Following standard flow convention~\citep{lipman2022flow} , we parameterize the
marginal $x_t$ as a linear mixture of a source sample $x_0$ and a target (or
data) sample $x_1$, such that
\begin{equation}
x_t = \alpha_t x_1 + \sigma_t x_0,
\label{eq:alphasigma-path}
\end{equation}
where $(\alpha_t,\sigma_t)$ is a scalar schedule with
$\alpha_0 = 0, \sigma_0 = 1$ and $\alpha_1 = 1, \sigma_1 = 0$. These conditions can be reversed to follow the convention used by large-scale models such as SD3~\citep{esser2024scalingrectifiedflowtransformers}, and WAN~\citep{wan2025wanopenadvancedlargescale}.


{\bf Classifier-free guidance.} Classifier-free guidance forms the guided velocity by linearly combining $v_{t}^c(x)$ and $v_{t}^u(x)$ as
\begin{equation}
v_{t}^{\text{cfg}}(x)
= v_{t}^{u}(x)
+ \omega \big( v_{t}^c(x) - v_{t}^{u}(x) \big),
\label{eq:cfg}
\end{equation}
where $\omega>1$ is the guidance scale.
Larger $\omega$ typically improves generation quality and text alignment, but also amplifies
approximation errors and drives trajectories into artifact-prone regions, often accompanied by increased saturation. 

\subsection{Characterising off-manifold flows}
\label{sec:manifold-guidance}

Our goal is to strengthen semantic guidance while avoiding \emph{off-manifold
drifts}, i.e., perturbations that significantly distort the model's learned
density $p_t$.  We view such drifts through the lens of probability
conservation and demonstrate that off-manifold flows can be decomposed into two contributions. Consequently, we derive a simple constraint on the guidance field $g_t$ that minimizes these contributions while guiding the generation.

\subsection{Guidance and probability conservation}
\label{sec:conservation}
We now characterise when a guidance field perturbs the model's
density and when it does not. The analysis applies identically to the
conditional and unconditional flows; we therefore drop superscripts
($u$, $c$) until the guidance formulation is fixed in
Sec.~\ref{manifold-aware-guidance}.

Let $p_t$ denote the density of $x_t$ induced by a primary velocity
field $v_t$. Probability conservation is expressed by the continuity
equation~\citep{NEURIPS2018_69386f6b}
\begin{equation}
\partial_t p_t + \nabla \!\cdot\! (p_t\, v_t) \;=\; 0.
\label{eq:continuity}
\end{equation}
Adding a guidance field $g_t$ to form $\tilde v_t \;:=\; v_t + g_t$
preserves $p_t$ as a solution of~\eqref{eq:continuity} if and only if
$\nabla \!\cdot\! (p_t\, g_t) = 0$. Expanding this constraint with the
identity $\nabla p_t = p_t\, \nabla \log p_t$ and using $p_t > 0$ on the
support yields the following equivalent condition.

\begin{proposition}[Conservation under guidance]
\label{prop:conservation}
The guided velocity $\tilde v_t = v_t + g_t$ preserves $p_t$ under the
continuity equation~\eqref{eq:continuity} if and only if (see Appendix~\ref{app:proof-conservation})
\begin{equation}
\underbrace{\nabla \!\cdot\! g_t(x)}_{\text{(i) divergence}}
\;+\;
\underbrace{g_t(x)^{\!\top}\, \nabla \log p_t(x)}_{\text{(ii) score-parallel flux}}
\;=\; 0.
\label{eq:score-conservation}
\end{equation}
\end{proposition}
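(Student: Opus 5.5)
The plan is to work directly with the continuity equation~\eqref{eq:continuity} and reduce the claim to a pointwise identity. First I will make precise what ``preserves $p_t$'' means. Since $p_t$ already solves $\partial_t p_t + \nabla\cdot(p_t v_t)=0$, $p_t$ also solves the guided equation $\partial_t p_t + \nabla\cdot(p_t \tilde v_t)=0$ exactly when the difference of the two left-hand sides vanishes. By linearity of the divergence this difference is $\nabla\cdot(p_t g_t)$. So the first step establishes the equivalence already noted in the text: $p_t$ remains a solution iff $\nabla\cdot(p_t g_t)=0$ on the domain under consideration. I will assume $p_t$ and $g_t$ are $C^1$, so that all classical derivatives exist.

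Second, I will expand the product by the Leibniz rule:
\[
\nabla\cdot(p_t g_t) = p_t\,\nabla\cdot g_t + g_t^{\top}\nabla p_t .
\]
On the support, where $p_t>0$, I will substitute $\nabla p_t = p_t\nabla\log p_t$, giving
\[
\nabla\cdot(p_t g_t) = p_t\bigl(\nabla\cdot g_t + g_t^{\top}\nabla\log p_t\bigr).
\]
Because $p_t(x)>0$, the factor can be divided out pointwise. Hence $\nabla\cdot(p_t g_t)(x)=0$ iff the bracket vanishes at $x$. This gives both directions of~\eqref{eq:score-conservation} at once.

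The main obstacle is the domain issue rather than the algebra. The score $\nabla\log p_t$ is only defined where $p_t>0$, so the statement must be read as an identity on the support of $p_t$. Off the support, $\nabla\cdot(p_t g_t)=0$ holds trivially wherever $p_t$ vanishes on an open set. If $p_t$ is strictly positive on $\mathbb{R}^D$, as it is for $t<1$ in Gaussian-path flows where $\sigma_t>0$, the issue disappears entirely. I would state this positivity and regularity assumption explicitly and note that at $t=1$, where $p_t$ may concentrate on the data manifold, the proposition is understood on the support only.
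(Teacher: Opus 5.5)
Your proposal is correct and follows the paper's proof step for step. Linearity of the divergence reduces the claim to $\nabla\cdot(p_t g_t)=0$, and the product rule with $\nabla p_t = p_t\nabla\log p_t$ and $p_t>0$ lets you factor out $p_t$; your added care about the support is consistent with the paper, which avoids the issue by assuming $p_t$ is $C^1$ and strictly positive on all of $\mathbb{R}^D$.
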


The two terms have distinct geometric meaning: (i) is the local volume
change induced by $g_t$, and (ii) is the flux of probability mass
across level sets of $p_t$. Only their \emph{sum} is constrained --
either may be non-zero individually, and both must vanish jointly for
$g_t$ to be conservative.

\paragraph{A computable surrogate.}
Score-based models provide direct access to $s_t(x) :=
\nabla \log p_t(x)$, but the divergence
$\nabla \!\cdot\! g_t$ is intractable for high-dimensional neural
fields at inference time. Our empirical analysis
(Sec.~\ref{subsec:divergence-estimation},
Fig.~\ref{fig:divergence}) shows that
$|\nabla \!\cdot\! g_t|$ is negligible relative to the divergence of
the learned velocities for the bulk of the trajectory and grows
sharply only as $t \to 1$. We therefore enforce a relaxed condition and differ the study of divergence term to Section~\ref{subsec:divergence-estimation},
\begin{equation}
g_t(x)^{\!\top}\, \nabla \log p_t(x) \;\approx\; 0,
\label{eq:score-orthogonality}
\end{equation}
which constrains $g_t$ to remain orthogonal to the score and confines
samples to iso-density surfaces of $p_t$. The omitted divergence
is addressed separately via a time-dependent guidance schedule
(Sec.~\ref{subsec:divergence-estimation}).


{\bf Relating score and rectified-flow velocity.}
For rectified flow parameterisations of the form described in subsection~\ref{Preliminaries}, corresponding probability-flow velocity $v_t$ admits a closed-form
representation in terms of the score
\begin{equation}
v_t(x)
= \frac{\dot\alpha_t}{\alpha_t}\,x
- \frac{\dot\sigma_t \sigma_t \alpha_t - \dot\alpha_t \sigma_t^2}{\alpha_t}
  \,\nabla_x \log p_t(x),
\label{eq:rectified-velocity}
\end{equation}
where dots denote derivatives with respect to $t$
(derivation in Appendix~\ref{app:pf-rectified}).
Defining scalar coefficients
\begin{equation}
a_t := \frac{\dot\alpha_t}{\alpha_t},
\qquad
b_t := \frac{\dot\sigma_t \sigma_t \alpha_t - \dot\alpha_t \sigma_t^2}{\alpha_t},
\end{equation}
we can rewrite \eqref{eq:rectified-velocity} as
\begin{equation}
\label{eq:velocity-score}
v_t(x) = a_t x - b_t \,\nabla_x \log p_t(x).
\end{equation}
Consequently, the score-parallel term demonstrated in  \eqref{eq:score-conservation} can be obtained from 
\begin{equation}
\label{eq:manifold-tangent-condition}
g_t(x)^\top \nabla_x \log p_t(x)
= \frac{1}{b_t}\,g_t(x)^\top 
\underbrace{\big(a_t x - v_t(x)\big)}_{=:~n_t(x)}.
\end{equation}
Thus, in rectified flows, we find that guidance that is orthogonal to
$a_t x - v_t(x)$, keeps samples primarily within iso-density surfaces, approximately satisfying the continuity equation.

\subsection{Divergence and guidance schedule}
\label{subsec:divergence-estimation}
\begin{table*}[t]
\centering
\footnotesize
\setlength{\tabcolsep}{3pt}
\renewcommand{\arraystretch}{1.1}
\begin{tabular}{l ccc ccc ccc ccc ccc ccc}
\toprule
& \multicolumn{9}{c}{\textbf{Optimal scale}}
& \multicolumn{9}{c}{\textbf{High-guidance scale}} \\
\cmidrule(lr){2-10} \cmidrule(lr){11-19}
& \multicolumn{3}{c}{SD3}
& \multicolumn{3}{c}{SD3.5}
& \multicolumn{3}{c}{Flux}
& \multicolumn{3}{c}{SD3 ($\omega{=}15$)}
& \multicolumn{3}{c}{SD3.5 ($\omega{=}15$)}
& \multicolumn{3}{c}{Flux ($\omega{=}3$)} \\
\cmidrule(lr){2-4} \cmidrule(lr){5-7} \cmidrule(lr){8-10}
\cmidrule(lr){11-13} \cmidrule(lr){14-16} \cmidrule(lr){17-19}
\textbf{Method}
& FID & IS & SAT & FID & IS & SAT & FID & IS & SAT
& FID & IS & SAT & FID & IS & SAT & FID & IS & SAT \\
\midrule
CFG        & 32.4 & \textbf{33.2} & \underline{0.53} & 35.8 & 28.8 & \underline{0.53} & 36.1 & 35.2 & 0.38
           & 42.6 & 24.9 & 0.63 & 62.7 & 18.3 & \underline{0.70} & 37.8 & 31.4 & 0.42 \\
Rect-CFG++ & \underline{32.4} & 30.2 & 0.55 & 39.6 & 26.0 & 0.55 & 35.7 & 34.9 & 0.37
           & 43.5 & 25.1 & 0.70 & 63.4 & 18.1 & 0.69 & 35.4 & 34.1 & 0.37 \\
TAG        & 32.4 & 32.8 & 0.53 & 35.2 & 29.7 & \underline{0.53} & 36.1 & 34.8 & 0.38
           & 42.2 & 25.9 & 0.64 & 61.7 & 18.6 & 0.71 & 37.4 & 31.3 & 0.41 \\
APG        & 39.0 & 25.7 & 0.59 & 54.9 & 20.3 & 0.58 & \textbf{34.3} & \underline{35.4} & \underline{0.35}
           & \underline{39.7} & \underline{26.1} & \underline{0.62} & \underline{54.9} & \underline{20.2} & 0.71 & \textbf{36.0} & \underline{34.2} & \underline{0.38} \\
\rowcolor{ourblue}
\textbf{Ours} & \textbf{30.4} & \underline{32.9} & \textbf{0.51} & \textbf{32.1} & \textbf{30.3} & \textbf{0.48} & \underline{34.8} & \textbf{36.6} & \textbf{0.34}
              & \textbf{35.6} & \textbf{29.4} & \textbf{0.59} & \textbf{54.6} & \textbf{21.5} & \textbf{0.66} & \underline{36.9} & \textbf{34.4} & \textbf{0.35} \\
\bottomrule
\end{tabular}
\caption{\textbf{Robustness under optimal and strong guidance.} AdaMaG dominates four baselines at the optimal (left) and high-stress (right) guidance scales, with the lead growing as $\omega$ increases. Arrows: FID $\downarrow$, IS $\uparrow$, SAT $\downarrow$. \textbf{Bold}: best; \underline{underlined}: second-best. The optimal $\omega$ values per model are reported in Sec.~\ref{sec:experiments}. For optimal scales and implementations see Figure~\ref{fig:guidance_sweep}, and Appendix~\ref{app:baselines}.}
\label{tab:guidance_metrics}
\end{table*}

To investigate the significance of the intractable divergence term in \eqref{eq:score-conservation}, we approximate $\nabla_x \!\cdot g_t(x)$ using Hutchinson’s unbiased estimator.
This approach utilises random probe vectors $\xi$ satisfying
$\mathbb{E}[\xi \xi^\top] = I$ and is implemented via Jacobian–vector products. While this procedure is too memory-intensive for deployment during sampling, it serves as a critical analytical tool. As detailed in Figure~\ref{fig:divergence}, we observe that the divergence of the guidance field remains negligible throughout the majority of the trajectory but exhibits a sharp increase near the end of sampling.
Consequently, we rely on the orthogonality approximation $g_t(x)^\top \nabla_x \log p_t(x) \approx 0$ for earlier timesteps, while introducing a power-law decay schedule to attenuate guidance intensity during later stages, where divergence is non-trivial and continuity equation does not hold.
Starting from a reference scale $\omega_{\mathrm{ref}}$, we modulate the guidance as $t \to 1$, bounded by a minimum strength $\omega_{\min}$, 

\begin{equation}
\label{eq:guidance-schedule}
\omega(t) = \max\!\big(\omega_{\min},\, \omega_{\mathrm{ref}}\, (1-t)^{\gamma}\big),
\qquad t \in [0,1].
\end{equation}
Here, $\omega_{\text{ref}}$ corresponds to the standard CFG guidance strength, while $\omega_{\min}$ and $\gamma$ govern the rate of late-time attenuation. 
Systematic ablation studies confirm that this schedule consistently improves generation quality across a wide range of hyperparameters.


\subsection{Manifold-aware guidance update}
\label{manifold-aware-guidance}

We now formalise the AdaMaG algorithm.
We adopt the standard CFG framework \eqref{eq:cfg}, where the unconditional velocity $v_t^{\mathrm{u}}(x)$ defines the base flow and conditioning is introduced via a guidance field $g_t(x)$.
Our goal is to suppress the score-parallel flux.
We first calculate the score proportional $n_t(x)$ as $n_t(x)=a_tx - v_t^c(x)$ (Eq.\eqref{eq:manifold-tangent-condition}).

We utilise the conditional flow $v_t^c$ for this approximation rather than $v_t^u$ because we empirically find it provides consistently larger gains, particularly at low guidance scales.
We then decompose the raw guidance field $g_t(x)$ into components parallel and orthogonal to this normal direction $n_t(x)$.

\begin{wrapfigure}{r}{0.5\columnwidth}
  \vspace{-\intextsep}
  \centering
  \includegraphics[width=0.5\columnwidth]{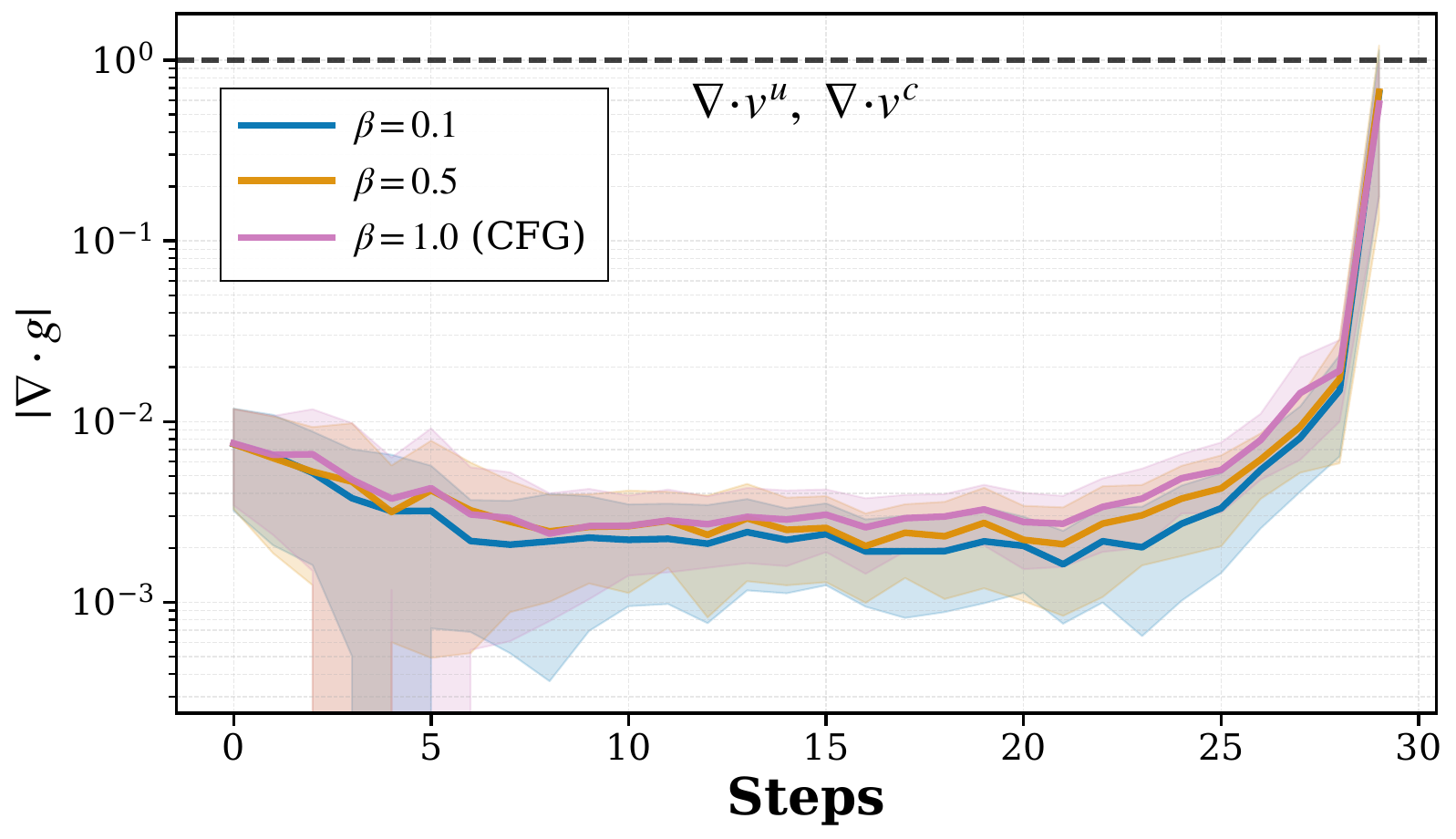}

  \caption{\textbf{Divergence magnitude (normalised by dimensionality) along the sampling trajectory.} Black curves show the divergence of the conditional and unconditional velocities as references; coloured curves show the guidance residual under varying score-parallel damping $\beta$ (with $\beta=1.0$ recovering CFG).}
  \vspace{-0.7cm}
  \label{fig:divergence}
\end{wrapfigure}
The parallel component is given by projecting $g_t$ onto $n_t$:
\begin{equation}
g_t^{\parallel}(x)
:= \frac{\langle g_t(x), n_t(x)\rangle}{\|n_t(x)\|^2}\, n_t(x),
\end{equation}
yielding orthogonal component 
$g_t^{\perp}(x) := g_t(x) - g_t^{\parallel}(x)$. 
Finally, combining these components with the schedule $\omega(t)$ from \eqref{eq:guidance-schedule}, we define the refined guidance field as:
\begin{equation}
\label{eq:final-update}
\tilde{g_t}(x)
:= \omega(t)\left(g_t^{\perp}(x) + \beta g_t^{\parallel}(x)\right),
\end{equation}
where $\beta \in (0,1)$ is a fixed scalar that dampens the score-parallel (orthogonal to iso-densities) component.
By prioritising $g_t^{\perp}$, this update confines the trajectory to approximate iso-density surfaces while retaining a controlled score-orthogonal component $g_t^{\perp}$ to maintain conditioning.
We apply stronger guidance early in the process to leverage the regime where probability conservation still holds and guidance is most effective. As sampling progresses ($t \to 1$), we attenuate the intensity; this prevents the saturation artifacts caused by the late-stage explosion in guidance divergence which leads to violation of probability conservation.

Although the analysis in Section~\ref{sec:conservation} is exact, the design choice to address those are built on two assumptions: (i) generality of late stage divergence blow up, and (ii) independence of the two terms in Eq.~\eqref{eq:score-conservation}. For these we provide detailed theoretical and empirical evidence in Appendix~\ref{sec:theory}.

\begin{table*}[t]
\centering
\footnotesize
\renewcommand{\arraystretch}{1.1}

\begin{minipage}[t]{0.51\textwidth}
\centering
\setlength{\tabcolsep}{3pt}
\begin{tabular}{l l cccc}
\toprule
\textbf{Model} & \textbf{Method} & \textbf{Color} $\uparrow$ & \textbf{Shape} $\uparrow$ & \textbf{Texture} $\uparrow$ & \textbf{Spatial} $\uparrow$ \\
\midrule
\multirow{2}{*}{SD3}
& CFG          & 0.7374 & 0.5789 & 0.7131 & 0.3211 \\
& \cellcolor{ourblue}\textbf{+ Ours} & \cellcolor{ourblue}\textbf{0.8134} & \cellcolor{ourblue}\textbf{0.5811} & \cellcolor{ourblue}\textbf{0.7790} & \cellcolor{ourblue}\textbf{0.3241} \\
\midrule
\multirow{2}{*}{SD3.5}
& CFG          & 0.7415 & 0.5743 & 0.7514 & 0.2851 \\
& \cellcolor{ourblue}\textbf{+ Ours} & \cellcolor{ourblue}\textbf{0.8026} & \cellcolor{ourblue}\textbf{0.5997} & \cellcolor{ourblue}\textbf{0.7821} & \cellcolor{ourblue}\textbf{0.2970} \\
\midrule
\multirow{2}{*}{Flux}
& CFG          & 0.5943 & 0.4114 & 0.5428 & \textbf{0.2409} \\
& \cellcolor{ourblue}\textbf{+ Ours} & \cellcolor{ourblue}\textbf{0.6118} & \cellcolor{ourblue}\textbf{0.4231} & \cellcolor{ourblue}\textbf{0.6429} & \cellcolor{ourblue}0.2376 \\
\bottomrule
\end{tabular}
\caption{\textbf{T2I-CompBench results.} AdaMaG improves over CFG across attribute-binding categories and most spatial settings. \textbf{Bold}: best per row pair.}
\label{tab:compbench}
\end{minipage}%
\hfill
\begin{minipage}[t]{0.47\textwidth}
\centering
\setlength{\tabcolsep}{2pt}
\begin{tabular}{l l c c c c c}
\toprule
\textbf{Model} & \textbf{Method} & \textbf{FID} $\downarrow$ & \textbf{IR} $\uparrow$ & \textbf{PS} $\uparrow$ & \textbf{HPSv2} $\uparrow$ & \textbf{SAT} $\downarrow$ \\
\midrule
\multirow{2}{*}{SD3}
& CFG          & 23.89 & 0.98 & 0.441 & 0.275 & 0.51 \\
& \cellcolor{ourblue}\textbf{+Ours} & \cellcolor{ourblue}\textbf{22.34} & \cellcolor{ourblue}\textbf{1.043} & \cellcolor{ourblue}\textbf{0.557} & \cellcolor{ourblue}\textbf{0.288} & \cellcolor{ourblue}\textbf{0.48} \\
\midrule
\multirow{2}{*}{SD3.5}
& CFG          & 20.29 & 1.04 & 0.492 & 0.281 & 0.49 \\
& \cellcolor{ourblue}\textbf{+Ours} & \cellcolor{ourblue}\textbf{19.18} & \cellcolor{ourblue}\textbf{1.118} & \cellcolor{ourblue}\textbf{0.594} & \cellcolor{ourblue}\textbf{0.293} & \cellcolor{ourblue}\textbf{0.46} \\
\midrule
\multirow{2}{*}{Flux}
& CFG          & 21.47 & 1.08 & 0.512 & 0.285 & 0.37 \\
& \cellcolor{ourblue}\textbf{+Ours} & \cellcolor{ourblue}\textbf{20.83} & \cellcolor{ourblue}\textbf{1.14} & \cellcolor{ourblue}\textbf{0.601} & \cellcolor{ourblue}\textbf{0.29} & \cellcolor{ourblue}\textbf{0.34} \\
\bottomrule
\end{tabular}
\caption{\textbf{High-resolution evaluation at $1024{\times}1024$.} IR: ImageReward, PS: PickScore, SAT: Saturation.}
\label{tab:high-res}
\end{minipage}

\end{table*}
\section{Experiments}
\label{sec:experiments}
We evaluate our method on text-to-image generation 
against standard classifier-free guidance (CFG) and recent baselines that explicitly target manifold preservation and saturation. We include Rect-CFG++~\citep{saini2025rectified}, TAG~\citep{cho2025tagtangentialamplifyingguidancehallucinationresistant}, and APG~\citep{sadat2025no}. Implementation details and hyperparameter settings for all baselines can be found in the  Appendix~\ref{app:baselines}.

Experiments are conducted on three large-scale models: Stable Diffusion 3 (SD3)~\citep{esser2024scalingrectifiedflowtransformers}, Stable Diffusion 3.5 (SD3.5), and Flux. SD3 and SD3.5 use standard classifier-free guidance, which our method directly targets, while Flux applies CFG through negative prompting, testing the same manifold-aware principles in a guidance-distilled setting. Following the standard text-to-image evaluation protocol, we generate 5{,}000 images from COCO validation-set prompts at $256{\times}256$, and $1024{\times}1024$, reporting Fr\'echet Inception Distance (FID), Inception Score (IS), ImageReward~\citep{xu2023imagereward}, PickScore~\citep{kirstain2023pick}, HPSv2~\citep{wu2023human}, and a saturation metric capturing the desaturation behaviour characteristic of strong guidance. Saturation here is implemented following APG~\citep{sadat2024eliminating}. We further evaluate compositional alignment on T2I-CompBench~\citep{huang2023t2i} across color, shape, texture, and spatial attributes. Qualitative results use 150 custom prompts at $1024{\times}1024$ (listed in Appendix~\ref{app:prompts}). All images are generated with Euler integration and 30 solver steps on a single H100 GPU.

\begin{figure*}[tb!]
\centering

\begin{minipage}{1.0\textwidth}
\centering

\includegraphics[width=\textwidth]{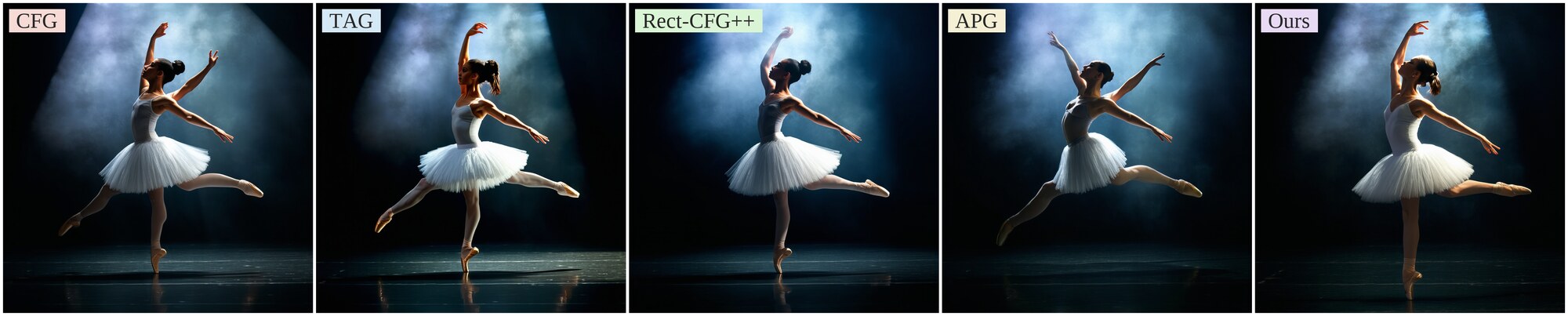}\\[-3pt]
\includegraphics[width=\textwidth]{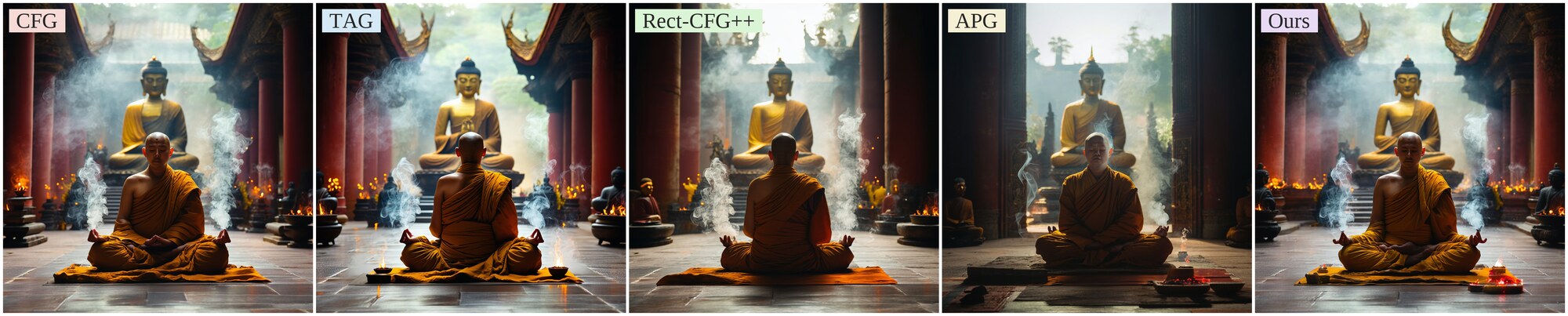}\\[-3pt]
\includegraphics[width=\textwidth]{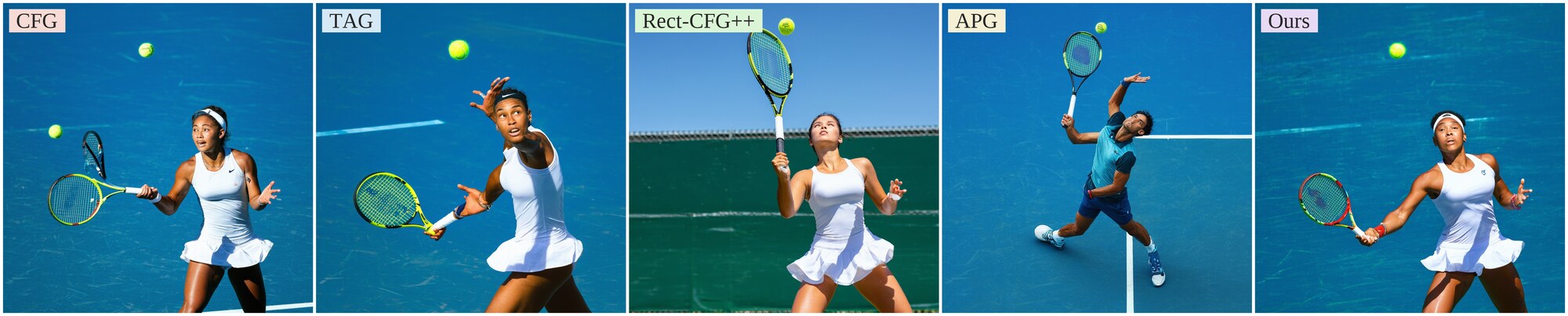}\\[-3pt]
\includegraphics[width=\textwidth]{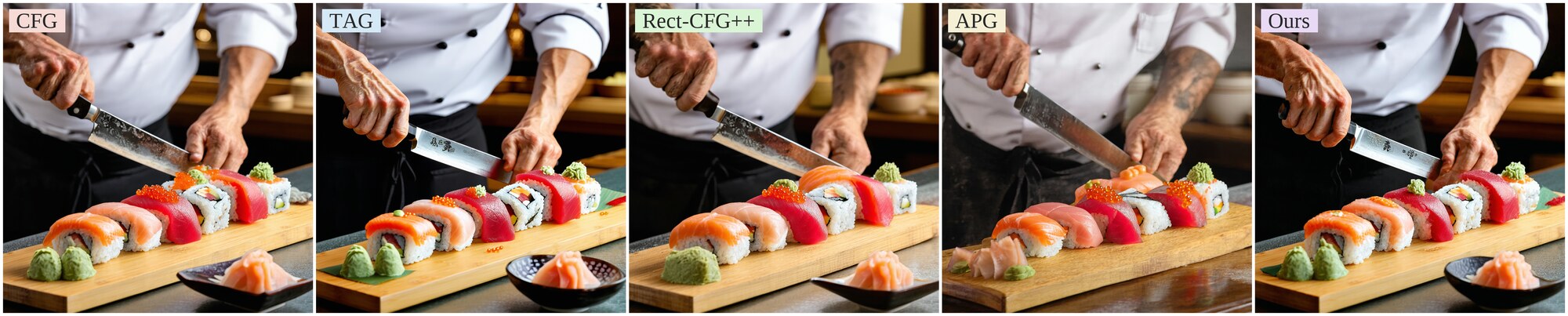}

\end{minipage}

\caption{Qualitative comparison between AdaMaG, and other baselines at their optimal setting.}
\vspace{-0.3cm}
\label{fig:main-qualitative}
\end{figure*}

{\bf Divergence study}
We quantify the divergence term of Eq.~\ref{eq:score-conservation} along the sampling trajectory using the procedure of Subsection~\ref{subsec:divergence-estimation}, focusing on its magnitude in early steps versus its behaviour as $t \to 1$. Figure~\ref{fig:divergence} reports the divergence magnitude (normalised by dimensionality) for the conditional, unconditional, and guidance velocities; the learned velocities serve as a natural manifold baseline since the model is trained to approximately satisfy the continuity equation.

{\bf Quantitative comparisons} 

To benchmark AdaMaG against CFG and related baselines, Table~\ref{tab:guidance_metrics} reports FID , IS, and saturation at both moderate and high guidance scales. Since the SD family and guidance-distilled Flux exhibit different guidance behaviour, we use model-specific scales as indicated in the table.
Across all models, AdaMaG consistently improves generation quality and reduces saturation. Particularly on SD3 and SD3.5, AdaMaG achieves an average 
\(8.19\%\) reduction in FID and a \(6.03\%\) reduction in saturation while also improving Inception Score, consistently outperforming other baselines.
A further observation is that AdaMaG remains competitive without retuning hyperparameters. In contrast, APG, the second strongest baseline in high guidance scale ranges, is sensitive to its momentum coefficient and radial bound, which can introduce instabilities, particularly at low guidance scales. We study parameter sensitivity for AdaMaG in Section~\ref{sec:ablation}.

Beyond standard quality metrics, AdaMaG also strengthens compositional alignment and preference-aligned quality at high resolution (Tables~\ref{tab:compbench}, \ref{tab:high-res}). On T2I-CompBench, AdaMaG consistently improves over CFG on attribute-binding categories (Color, Shape, Texture) across all three models, with smaller but generally positive gains on Spatial layout. At $1024{\times}1024$, AdaMaG yields concurrent gains in FID, ImageReward, PickScore, HPSv2, and saturation across SD3, SD3.5, and Flux, indicating the manifold-preservation gains transfer cleanly to high-resolution and human aligned outputs.

\begin{wrapfigure}{r}{0.5\columnwidth}
  \vspace{-\intextsep}
  \centering
  \includegraphics[width=0.5\columnwidth]{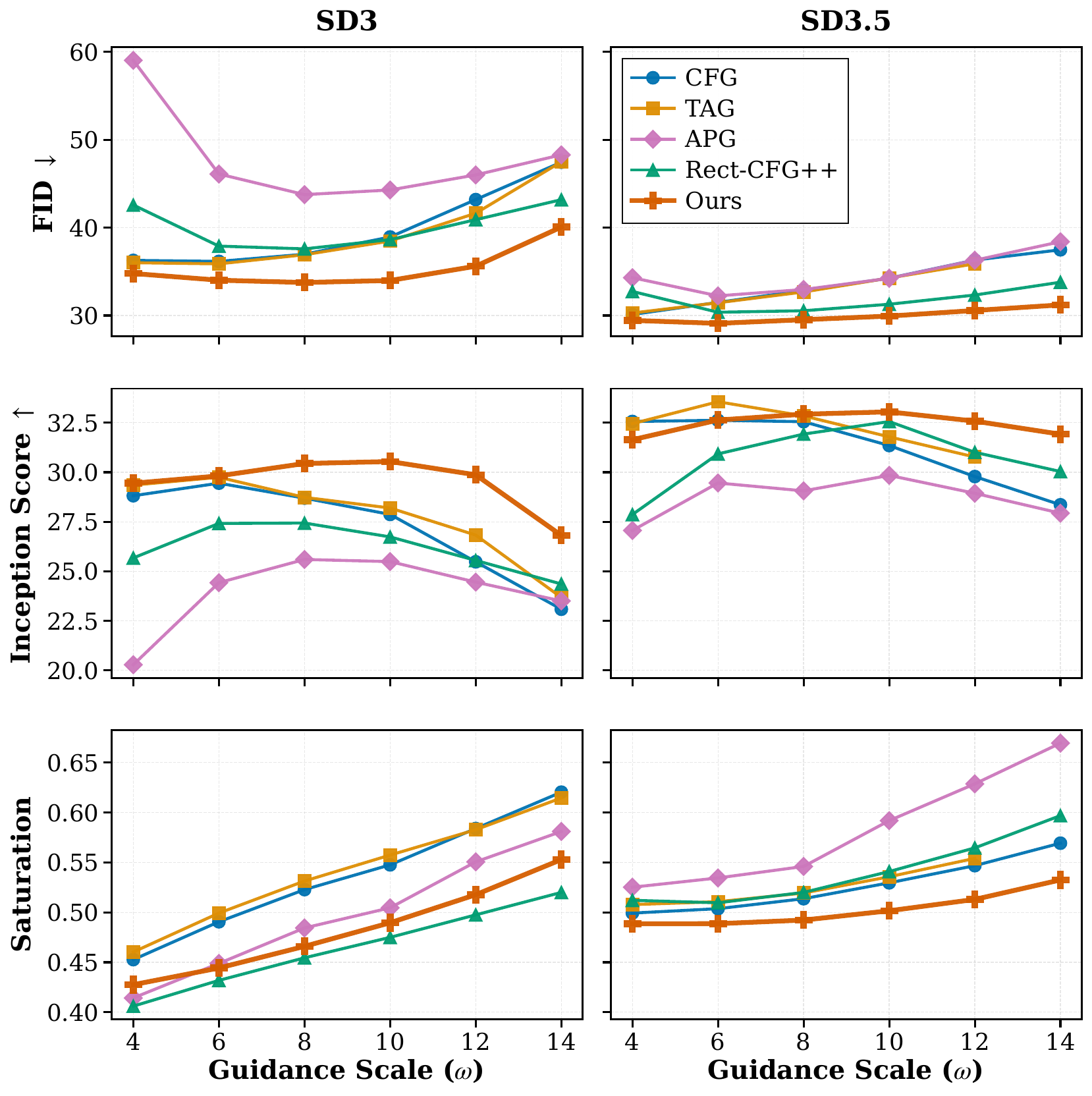}
  \caption{Guidance-scale sweeps for FID, IS, and saturation across methods for SD3 (left column) and SD3.5 (right column).}
  \vspace{-0.7cm}
  \label{fig:guidance_sweep}
\end{wrapfigure}
\noindent\textbf{Qualitative comparisons.}
Fig.~\ref{fig:main-qualitative} presents qualitative comparisons across the considered baselines, highlighting generation quality and artifact frequency.
All samples are generated using each method's optimal guidance scale as selected from Fig.~\ref{fig:guidance_sweep}.
Overall, AdaMaG produces higher-quality generations with noticeably fewer artifacts. 
Since Fig.~\ref{fig:main-qualitative} uses scales optimised jointly for FID and saturation (Fig.~\ref{fig:guidance_sweep}), Fig.~\ref{fig:saturation_qual} provides a focused comparison under heavy guidance between CFG, APG (a desaturation-focused method), and AdaMaG.

To complement the automated metrics, we
conducted a pairwise preference study with 30 evaluators on 30 prompts
using SD3, with each method run at its optimal guidance scale. For
every prompt, evaluators chose between AdaMaG and a baseline along three
axes  (text alignment, image quality, and overall preference) or marked
the pair as a tie. Figure~\ref{fig:human_study} 
reports the aggregated
preferences. AdaMaG is preferred over CFG, TAG, and APG across all three
criteria, with the strongest margins against APG (63.3\% vs.\ 13.3\% on
text alignment) and CFG (60.0\% vs.\ 10.0\% on overall preference). The
lead over TAG is narrower but consistent across all three axes
(46.7--53.3\% in our favour vs.\ 13.3--20.0\%), indicating that the
manifold-preservation gains translate into perceptually meaningful
improvements beyond FID and IS.

\section{Ablations} \label{sec:ablation}

{\bf Score-parallel damping.} 
We first ablate the score-parallel term attenuation in AdaMaG by varying the score-parallel scaling factor \(\beta\), which controls the magnitude of the score-parallel component in our update. Table~\ref{tab:beta_ablation} reports the resulting trade-off in fidelity and artifact suppression, isolating the effect of minimizing the second term in Eq.~\ref{eq:score-conservation}.
This table shows that score-parallel component damping consistently improves fidelity over CFG, reducing FID, and marginally saturation while improving IS. Recall remains essentially unchanged across \(\beta\), suggesting that scaling down the score-parallel component does not harm the coverage.
\begin{table*}[t]
\centering
\begin{minipage}[t]{0.48\textwidth}
\centering
\caption{{\bf Score-parallel damping ablation.} Effect of varying $\beta$ on generation quality.}

\scriptsize
\setlength{\tabcolsep}{6pt}
\begin{tabular}{lcccc}
\toprule
\textbf{Setting} & \textbf{FID} $\downarrow$ & \textbf{IS} $\uparrow$ & \textbf{SAT} $\downarrow$ & \textbf{Recall} $\uparrow$ \\
\midrule
CFG  & 32.64 & 33.04 ± 1.44 & 0.52 & 0.68 \\
\midrule
\(\beta=0.0\) & 31.47 & 32.67 ± 2.41 & 0.50 & 0.70 \\
\(\beta=0.1\) & 31.39 & 33.17 ± 1.74 & 0.50 & 0.70 \\
\(\beta=1.0\) & 31.55 & 33.12 ± 1.05 & 0.50 & 0.69 \\
\(\beta=5.0\) & 32.10 & 33.13 ± 1.24 & 0.51 & 0.70 \\
\bottomrule
\end{tabular}
\label{tab:beta_ablation}
\end{minipage}%
\hfill
\begin{minipage}[t]{0.48\textwidth}
\centering
\caption{{\bf Guidance schedule exponent ablation.} Effect of varying $\gamma$ at fixed $\beta=0.1$.}

\scriptsize
\setlength{\tabcolsep}{6pt}
\begin{tabular}{lcccc}
\toprule
\textbf{Setting} & \textbf{FID} $\downarrow$ & \textbf{IS} $\uparrow$ & \textbf{SAT} $\downarrow$ & \textbf{Recall} $\uparrow$ \\
\midrule
Ours \(\beta=0.1\) & 31.39 & 33.17 ± 1.74 & 0.50 & 0.70 \\
\midrule
\(\gamma=0.1\) & 31.15 & 32.79 ± 1.32 & 0.497 & 0.71 \\
\(\gamma=1.0\) & 29.97 & 33.14 ± 0.88 & 0.493 & 0.71 \\
\(\gamma=2.0\) & 29.55 & 33.32 ± 1.55 & 0.491 & 0.71 \\
\(\gamma=4.0\) & 29.29 & 33.06 ± 1.69 & 0.489 & 0.72 \\
\bottomrule
\end{tabular}
\label{tab:gamma_ablation}
\end{minipage}
\end{table*}

{\bf Guidance schedule.}
The first term in \eqref{eq:score-conservation} captures the divergence of the guidance field, which we mitigate using the power-law guidance schedule introduced in Section~\ref{subsec:divergence-estimation}. Table~\ref{tab:gamma_ablation} reports an ablation over the exponent $\gamma$, showing how the schedule strength affects FID, IS, and saturation. Presented results indicate that increasing \(\gamma\) consistently reduces both FID and saturation, while IS exhibits an optimum at an intermediate setting. Importantly, the schedule never underperforms the no-schedule baseline, and although larger \(\gamma\) continues to improve fidelity and desaturation, we use \(\gamma=4.0\) throughout the study to maintain a competitive IS.

\section{Discussion}
\label{sec:discussion}
\begin{figure}[t]
  \centering
  \includegraphics[width=0.9\textwidth]{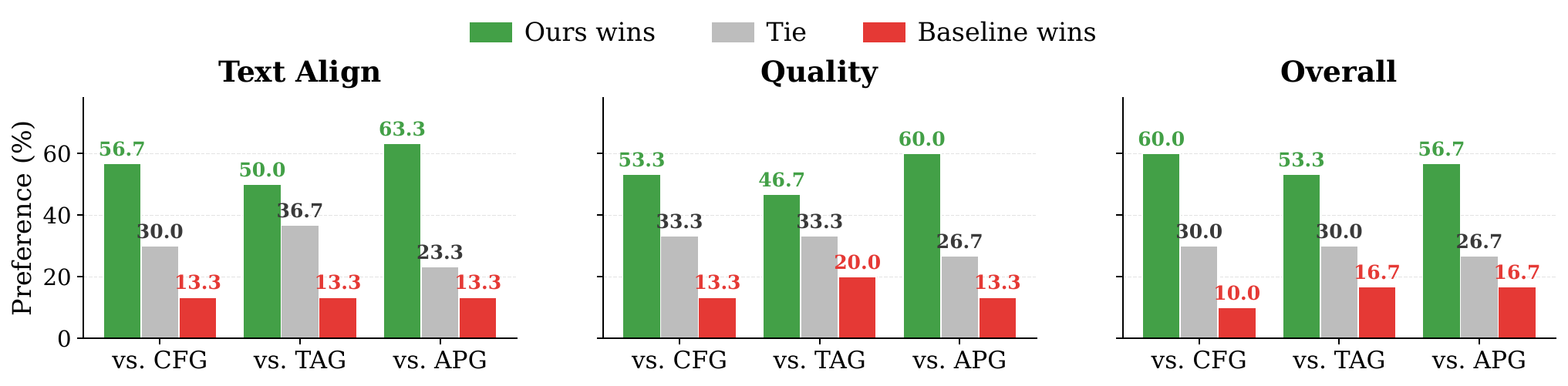}
  \caption{\textbf{Human evaluation.} AdaMaG is preferred over CFG, TAG, and APG on text alignment, image quality, and overall preference.}
  \label{fig:human_study}
\end{figure}
We provide an additional interpretation of our framework next and defer the discussion of the theoretical connection to other works to Appendix~\ref{app:theoretical_relevance}. 

\begin{wrapfigure}{r}{0.48\columnwidth}
  \centering
  \includegraphics[width=\linewidth]{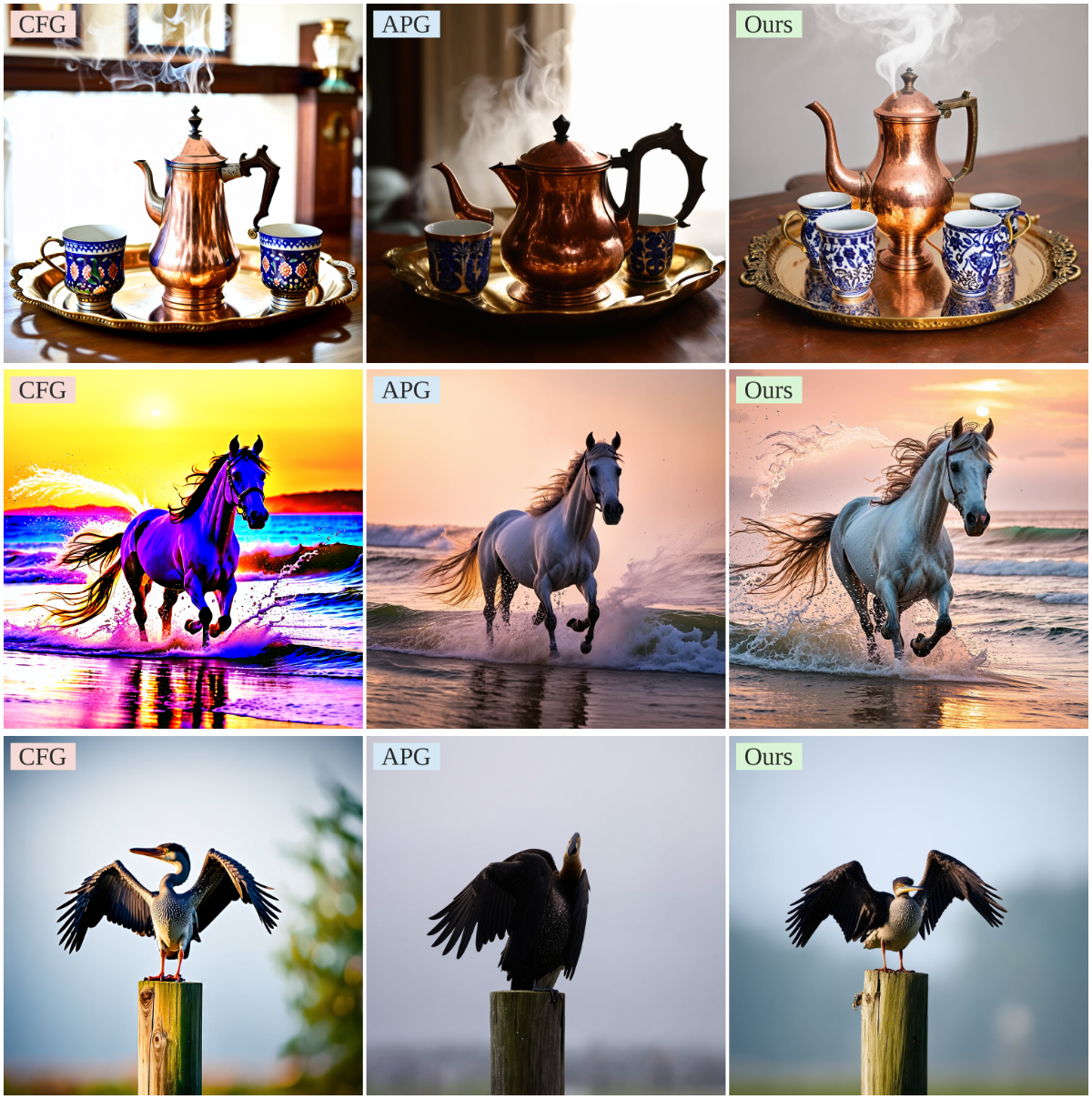}
  \caption{Qualitative saturation comparison at $\omega=15$.}
  \label{fig:saturation_qual}
\end{wrapfigure}
\noindent\textbf{Practical Interpretation.} 
From the decomposition in Eq.~\ref{eq:score-conservation}, guidance splits into an score-parallel component and a divergence component. Both terms violate probability conservation, but they play qualitatively different roles. The divergence term introduces the source/sink mechanism that provides the actual conditioning signal, whereas the score-parallel term induces additional off-manifold drift, which manifests as hallucination and oversaturation.

Consequently, using a larger guidance scale early in sampling strengthens the source/sink effect and helps establish global semantics and structure under stronger conditioning. As $t \to 1$, we damp guidance to avoid excessive late-stage forcing, which can otherwise produce overly sharp, cartoonish details. In practice, this can be achieved either by reducing the guidance scale $\omega(t)$ or by bounding the magnitude of the guidance residual via a radial norm constraint. Since $g_t$ scales with $\omega$ (e.g., $g_t=\omega\,(v_t^{c}-v_t^{u})$), both mechanisms effectively limit the guidance strength over time.

\section{Conclusion}
We introduced AdaMaG, a guidance update rule that leverages probability conservation as a proxy for the model’s generative manifold, and explicitly limits deviations from this constraint during sampling. Our analysis identified two primary sources of violation, (a) the score-parallel component of the guidance update, and (b) divergence of the guided field. We addressed these effects via score-parallel attenuation and a simple time-dependent guidance schedule, respectively. Through text-guided image generation experiments, we showed that AdaMaG consistently improves upon standard CFG and other baselines, with performance that is largely insensitive to the choice of hyperparameters.

\textbf{Limitations.} A primary limitation of our framework is that it does not explicitly calculate the divergence term during the sampling process due to computational and memory constraints. Addressing this requires future research into generation with cheap, instance-specific likelihood evaluation.

\newpage



\bibliographystyle{plainnat}
\bibliography{neurips_2026}

\begin{thebibliography}{33}
\providecommand{\natexlab}[1]{#1}
\providecommand{\url}[1]{\texttt{#1}}
\expandafter\ifx\csname urlstyle\endcsname\relax
  \providecommand{\doi}[1]{doi: #1}\else
  \providecommand{\doi}{doi: \begingroup \urlstyle{rm}\Url}\fi

\bibitem[Albergo and Vanden-Eijnden(2023)]{albergo2023building}
Michael~Samuel Albergo and Eric Vanden-Eijnden.
\newblock Building normalizing flows with stochastic interpolants.
\newblock In \emph{The Eleventh International Conference on Learning
  Representations}, 2023.

\bibitem[Balaji et~al.(2022)Balaji, Nah, Huang, Vahdat, Song, Zhang, Kreis,
  Aittala, Aila, Laine, et~al.]{balaji2023ediffitexttoimagediffusionmodels}
Yogesh Balaji, Seungjun Nah, Xun Huang, Arash Vahdat, Jiaming Song, Qinsheng
  Zhang, Karsten Kreis, Miika Aittala, Timo Aila, Samuli Laine, et~al.
\newblock ediff-i: Text-to-image diffusion models with an ensemble of expert
  denoisers.
\newblock \emph{arXiv preprint arXiv:2211.01324}, 2022.

\bibitem[Chen et~al.(2018)Chen, Rubanova, Bettencourt, and
  Duvenaud]{NEURIPS2018_69386f6b}
Ricky T.~Q. Chen, Yulia Rubanova, Jesse Bettencourt, and David~K Duvenaud.
\newblock Neural ordinary differential equations.
\newblock In S.~Bengio, H.~Wallach, H.~Larochelle, K.~Grauman, N.~Cesa-Bianchi,
  and R.~Garnett, editors, \emph{Advances in Neural Information Processing
  Systems}, volume~31. Curran Associates, Inc., 2018.

\bibitem[Cho et~al.(2025)Cho, Ahn, Hong, Kim, Kim, and
  Jin]{cho2025tagtangentialamplifyingguidancehallucinationresistant}
Hyunmin Cho, Donghoon Ahn, Susung Hong, Jee~Eun Kim, Seungryong Kim, and
  Kyong~Hwan Jin.
\newblock Tag: Tangential amplifying guidance for hallucination-resistant
  diffusion sampling.
\newblock \emph{arXiv preprint arXiv:2510.04533}, 2025.

\bibitem[Chung et~al.(2024)Chung, Kim, Park, Nam, and Ye]{chung2024cfg}
Hyungjin Chung, Jeongsol Kim, Geon~Yeong Park, Hyelin Nam, and Jong~Chul Ye.
\newblock Cfg++: Manifold-constrained classifier free guidance for diffusion
  models.
\newblock \emph{arXiv preprint arXiv:2406.08070}, 2024.

\bibitem[Dhariwal and Nichol(2021)]{NEURIPS2021_49ad23d1}
Prafulla Dhariwal and Alexander Nichol.
\newblock Diffusion models beat gans on image synthesis.
\newblock In M.~Ranzato, A.~Beygelzimer, Y.~Dauphin, P.S. Liang, and J.~Wortman
  Vaughan, editors, \emph{Advances in Neural Information Processing Systems},
  volume~34, pages 8780--8794. Curran Associates, Inc., 2021.

\bibitem[Esser et~al.(2024)Esser, Kulal, Blattmann, Entezari, M{\"u}ller,
  Saini, Levi, Lorenz, Sauer, Boesel,
  et~al.]{esser2024scalingrectifiedflowtransformers}
Patrick Esser, Sumith Kulal, Andreas Blattmann, Rahim Entezari, Jonas
  M{\"u}ller, Harry Saini, Yam Levi, Dominik Lorenz, Axel Sauer, Frederic
  Boesel, et~al.
\newblock Scaling rectified flow transformers for high-resolution image
  synthesis.
\newblock In \emph{Forty-first international conference on machine learning},
  2024.

\bibitem[He et~al.(2024)He, Murata, Lai, Takida, Uesaka, Kim, Liao, Mitsufuji,
  Kolter, Salakhutdinov, and Ermon]{he2024manifold}
Yutong He, Naoki Murata, Chieh-Hsin Lai, Yuhta Takida, Toshimitsu Uesaka,
  Dongjun Kim, Wei-Hsiang Liao, Yuki Mitsufuji, J~Zico Kolter, Ruslan
  Salakhutdinov, and Stefano Ermon.
\newblock Manifold preserving guided diffusion.
\newblock In \emph{The Twelfth International Conference on Learning
  Representations}, 2024.

\bibitem[Ho and Salimans(2021)]{ho2021classifierfree}
Jonathan Ho and Tim Salimans.
\newblock Classifier-free diffusion guidance.
\newblock In \emph{NeurIPS 2021 Workshop on Deep Generative Models and
  Downstream Applications}, 2021.

\bibitem[Ho et~al.(2020)Ho, Jain, and Abbeel]{NEURIPS2020_4c5bcfec}
Jonathan Ho, Ajay Jain, and Pieter Abbeel.
\newblock Denoising diffusion probabilistic models.
\newblock In H.~Larochelle, M.~Ranzato, R.~Hadsell, M.F. Balcan, and H.~Lin,
  editors, \emph{Advances in Neural Information Processing Systems}, volume~33,
  pages 6840--6851. Curran Associates, Inc., 2020.

\bibitem[Hong(2024)]{hong2024smoothed}
Susung Hong.
\newblock Smoothed energy guidance: Guiding diffusion models with reduced
  energy curvature of attention.
\newblock In \emph{The Thirty-eighth Annual Conference on Neural Information
  Processing Systems}, 2024.

\bibitem[Hong et~al.(2023)Hong, Lee, Jang, and Kim]{Hong_2023_ICCV}
Susung Hong, Gyuseong Lee, Wooseok Jang, and Seungryong Kim.
\newblock Improving sample quality of diffusion models using self-attention
  guidance.
\newblock In \emph{Proceedings of the IEEE/CVF International Conference on
  Computer Vision (ICCV)}, pages 7462--7471, October 2023.

\bibitem[Huang et~al.(2023)Huang, Sun, Xie, Li, and Liu]{huang2023t2i}
Kaiyi Huang, Kaiyue Sun, Enze Xie, Zhenguo Li, and Xihui Liu.
\newblock T2i-compbench: A comprehensive benchmark for open-world compositional
  text-to-image generation.
\newblock \emph{Advances in Neural Information Processing Systems},
  36:\penalty0 78723--78747, 2023.

\bibitem[Hyung et~al.(2025)Hyung, Kim, Hong, Kim, and Choo]{Hyung_2025_CVPR}
Junha Hyung, Kinam Kim, Susung Hong, Min-Jung Kim, and Jaegul Choo.
\newblock Spatiotemporal skip guidance for enhanced video diffusion sampling.
\newblock In \emph{Proceedings of the IEEE/CVF Conference on Computer Vision
  and Pattern Recognition (CVPR)}, pages 11006--11015, June 2025.

\bibitem[Ifriqi et~al.(2025)Ifriqi, Romero-Soriano, Drozdzal, Verbeek, and
  Alahari]{ifriqi2025entropy}
Tariq~Berrada Ifriqi, Adriana Romero-Soriano, Michal Drozdzal, Jakob Verbeek,
  and Karteek Alahari.
\newblock Entropy rectifying guidance for diffusion and flow models.
\newblock In \emph{NeurIPS 2025-Thirty-ninth Conference on Neural Information
  Processing Systems}, 2025.

\bibitem[Jang et~al.(2025)Jang, Ki, Jo, Yoon, Kim, Lin, and
  Hwang]{jang2025frameguidancetrainingfreeguidance}
Sangwon Jang, Taekyung Ki, Jaehyeong Jo, Jaehong Yoon, Soo~Ye Kim, Zhe Lin, and
  Sung~Ju Hwang.
\newblock Frame guidance: Training-free guidance for frame-level control in
  video diffusion models.
\newblock \emph{arXiv preprint arXiv:2506.07177}, 2025.

\bibitem[Karras et~al.(2024)Karras, Aittala, Kynk{\"a}{\"a}nniemi, Lehtinen,
  Aila, and Laine]{karras2024guiding}
Tero Karras, Miika Aittala, Tuomas Kynk{\"a}{\"a}nniemi, Jaakko Lehtinen, Timo
  Aila, and Samuli Laine.
\newblock Guiding a diffusion model with a bad version of itself.
\newblock In \emph{The Thirty-eighth Annual Conference on Neural Information
  Processing Systems}, 2024.

\bibitem[Kirstain et~al.(2023)Kirstain, Polyak, Singer, Matiana, Penna, and
  Levy]{kirstain2023pick}
Yuval Kirstain, Adam Polyak, Uriel Singer, Shahbuland Matiana, Joe Penna, and
  Omer Levy.
\newblock Pick-a-pic: An open dataset of user preferences for text-to-image
  generation.
\newblock \emph{Advances in neural information processing systems},
  36:\penalty0 36652--36663, 2023.

\bibitem[Lipman et~al.(2022)Lipman, Chen, Ben-Hamu, Nickel, and
  Le]{lipman2022flow}
Yaron Lipman, Ricky~TQ Chen, Heli Ben-Hamu, Maximilian Nickel, and Matt Le.
\newblock Flow matching for generative modeling.
\newblock \emph{arXiv preprint arXiv:2210.02747}, 2022.

\bibitem[Lu et~al.(2023)Lu, Chen, Chen, Su, Li, and
  Zhu]{lu2023contrastiveenergypredictionexact}
Cheng Lu, Huayu Chen, Jianfei Chen, Hang Su, Chongxuan Li, and Jun Zhu.
\newblock Contrastive energy prediction for exact energy-guided diffusion
  sampling in offline reinforcement learning.
\newblock In \emph{International Conference on Machine Learning}, pages
  22825--22855. PMLR, 2023.

\bibitem[Nichol et~al.(2021)Nichol, Dhariwal, Ramesh, Shyam, Mishkin, McGrew,
  Sutskever, and Chen]{nichol2022glidephotorealisticimagegeneration}
Alex Nichol, Prafulla Dhariwal, Aditya Ramesh, Pranav Shyam, Pamela Mishkin,
  Bob McGrew, Ilya Sutskever, and Mark Chen.
\newblock Glide: Towards photorealistic image generation and editing with
  text-guided diffusion models.
\newblock \emph{arXiv preprint arXiv:2112.10741}, 2021.

\bibitem[Rombach et~al.(2022)Rombach, Blattmann, Lorenz, Esser, and
  Ommer]{Rombach_2022_CVPR}
Robin Rombach, Andreas Blattmann, Dominik Lorenz, Patrick Esser, and Bj\"orn
  Ommer.
\newblock High-resolution image synthesis with latent diffusion models.
\newblock In \emph{Proceedings of the IEEE/CVF Conference on Computer Vision
  and Pattern Recognition (CVPR)}, pages 10684--10695, June 2022.

\bibitem[Sadat et~al.(2024)Sadat, Hilliges, and Weber]{sadat2024eliminating}
Seyedmorteza Sadat, Otmar Hilliges, and Romann~M Weber.
\newblock Eliminating oversaturation and artifacts of high guidance scales in
  diffusion models.
\newblock In \emph{The Thirteenth International Conference on Learning
  Representations}, 2024.

\bibitem[Sadat et~al.(2025)Sadat, Kansy, Hilliges, and Weber]{sadat2025no}
Seyedmorteza Sadat, Manuel Kansy, Otmar Hilliges, and Romann~M. Weber.
\newblock No training, no problem: Rethinking classifier-free guidance for
  diffusion models.
\newblock In \emph{The Thirteenth International Conference on Learning
  Representations}, 2025.

\bibitem[Saini et~al.(2025)Saini, Gupta, and Bovik]{saini2025rectified}
Shreshth Saini, Shashank Gupta, and Alan~C Bovik.
\newblock Rectified-cfg++ for flow based models.
\newblock \emph{arXiv preprint arXiv:2510.07631}, 2025.

\bibitem[Sohl-Dickstein et~al.(2015)Sohl-Dickstein, Weiss, Maheswaranathan, and
  Ganguli]{pmlr-v37-sohl-dickstein15}
Jascha Sohl-Dickstein, Eric Weiss, Niru Maheswaranathan, and Surya Ganguli.
\newblock Deep unsupervised learning using nonequilibrium thermodynamics.
\newblock In Francis Bach and David Blei, editors, \emph{Proceedings of the
  32nd International Conference on Machine Learning}, volume~37 of
  \emph{Proceedings of Machine Learning Research}, pages 2256--2265, Lille,
  France, 07--09 Jul 2015. PMLR.

\bibitem[Song et~al.(2023)Song, Zhang, Yin, Mardani, Liu, Kautz, Chen, and
  Vahdat]{pmlr-v202-song23k}
Jiaming Song, Qinsheng Zhang, Hongxu Yin, Morteza Mardani, Ming-Yu Liu, Jan
  Kautz, Yongxin Chen, and Arash Vahdat.
\newblock Loss-guided diffusion models for plug-and-play controllable
  generation.
\newblock In Andreas Krause, Emma Brunskill, Kyunghyun Cho, Barbara Engelhardt,
  Sivan Sabato, and Jonathan Scarlett, editors, \emph{Proceedings of the 40th
  International Conference on Machine Learning}, volume 202 of
  \emph{Proceedings of Machine Learning Research}, pages 32483--32498. PMLR,
  23--29 Jul 2023.

\bibitem[Song et~al.(2021)Song, Sohl-Dickstein, Kingma, Kumar, Ermon, and
  Poole]{song2021scorebased}
Yang Song, Jascha Sohl-Dickstein, Diederik~P Kingma, Abhishek Kumar, Stefano
  Ermon, and Ben Poole.
\newblock Score-based generative modeling through stochastic differential
  equations.
\newblock In \emph{International Conference on Learning Representations}, 2021.

\bibitem[Wan et~al.(2025)Wan, Wang, Ai, Wen, Mao, Xie, Chen, Yu, Zhao, Yang,
  et~al.]{wan2025wanopenadvancedlargescale}
Team Wan, Ang Wang, Baole Ai, Bin Wen, Chaojie Mao, Chen-Wei Xie, Di~Chen,
  Feiwu Yu, Haiming Zhao, Jianxiao Yang, et~al.
\newblock Wan: Open and advanced large-scale video generative models.
\newblock \emph{arXiv preprint arXiv:2503.20314}, 2025.

\bibitem[Wu et~al.(2023)Wu, Hao, Sun, Chen, Zhu, Zhao, and Li]{wu2023human}
Xiaoshi Wu, Yiming Hao, Keqiang Sun, Yixiong Chen, Feng Zhu, Rui Zhao, and
  Hongsheng Li.
\newblock Human preference score v2: A solid benchmark for evaluating human
  preferences of text-to-image synthesis.
\newblock \emph{arXiv preprint arXiv:2306.09341}, 2023.

\bibitem[Xu et~al.(2023)Xu, Liu, Wu, Tong, Li, Ding, Tang, and
  Dong]{xu2023imagereward}
Jiazheng Xu, Xiao Liu, Yuchen Wu, Yuxuan Tong, Qinkai Li, Ming Ding, Jie Tang,
  and Yuxiao Dong.
\newblock Imagereward: Learning and evaluating human preferences for
  text-to-image generation.
\newblock \emph{Advances in Neural Information Processing Systems},
  36:\penalty0 15903--15935, 2023.

\bibitem[Yu et~al.(2023)Yu, Wang, Zhao, Ghanem, and
  Zhang]{yu2023freedomtrainingfreeenergyguidedconditional}
Jiwen Yu, Yinhuai Wang, Chen Zhao, Bernard Ghanem, and Jian Zhang.
\newblock Freedom: Training-free energy-guided conditional diffusion model.
\newblock In \emph{Proceedings of the IEEE/CVF International Conference on
  Computer Vision}, pages 23174--23184, 2023.

\bibitem[Zheng and
  Lan(2023)]{zheng2024characteristicguidancenonlinearcorrection}
Candi Zheng and Yuan Lan.
\newblock Characteristic guidance: Non-linear correction for diffusion model at
  large guidance scale.
\newblock \emph{arXiv preprint arXiv:2312.07586}, 2023.

\end{thebibliography}


\appendix

\section{Proof of Proposition~\ref{prop:conservation}}
\label{app:proof-conservation}

\begin{proposition*}[Conservation under guidance]
Let $p_t \in C^1(\mathbb{R}^D; \mathbb{R}_{>0})$ satisfy
$\partial_t p_t + \nabla \!\cdot\! (p_t v_t) = 0$
for $v_t \in C^1(\mathbb{R}^D; \mathbb{R}^D)$, and let
$g_t \in C^1(\mathbb{R}^D; \mathbb{R}^D)$. Then $p_t$ satisfies the
continuity equation under the guided velocity
$\tilde v_t := v_t + g_t$ if and only if
\begin{equation}
\nabla \!\cdot\! g_t \;+\; g_t^{\!\top} \nabla \log p_t \;=\; 0
\quad \text{on } \mathrm{supp}(p_t).
\label{eq:app-score-conservation}
\end{equation}
\end{proposition*}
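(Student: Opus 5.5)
The plan is a direct computation: subtract the two continuity equations, then expand the remaining divergence with the product rule.

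\emph{Step 1 (reduction to $\nabla\!\cdot(p_t g_t)=0$).} By definition, $p_t$ satisfies the continuity equation under $\tilde v_t$ when
$\partial_t p_t + \nabla\!\cdot(p_t \tilde v_t) = 0$.
Divergence is linear, so
\[
\nabla\!\cdot(p_t\tilde v_t) = \nabla\!\cdot(p_t v_t) + \nabla\!\cdot(p_t g_t).
\]
The hypothesis $\partial_t p_t + \nabla\!\cdot(p_t v_t)=0$ therefore shows that the guided equation holds if and only if $\nabla\!\cdot(p_t g_t)=0$ pointwise. Every term here is well defined because $p_t$, $v_t$ and $g_t$ are all $C^1$.

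\emph{Step 2 (product rule).} Since $p_t$ is $C^1$ and $g_t$ is a $C^1$ vector field,
\[
\nabla\!\cdot(p_t g_t) = p_t\,\nabla\!\cdot g_t + g_t^{\top}\nabla p_t .
\]
Because $p_t>0$, the function $\log p_t$ is $C^1$ and $\nabla p_t = p_t\nabla\log p_t$. Hence
\[
\nabla\!\cdot(p_t g_t) = p_t\big(\nabla\!\cdot g_t + g_t^{\top}\nabla\log p_t\big).
\]

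\emph{Step 3 (dividing by $p_t$).} On $\mathrm{supp}(p_t)$ we have $p_t>0$, so $\nabla\!\cdot(p_t g_t)=0$ there if and only if the bracket vanishes, which is exactly \eqref{eq:app-score-conservation}. Chaining the equivalences from Steps 1--3 proves both directions.

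The only point requiring care is the domain of validity. The statement takes $p_t$ strictly positive on all of $\mathbb{R}^D$, so the support is everything and the division is harmless. If one wanted to allow $p_t$ to vanish somewhere, the equivalence should be read on the open set $\{p_t>0\}$. Outside that set, $\nabla\!\cdot(p_t g_t)$ vanishes automatically wherever $p_t$ is zero on an open neighbourhood, and on the boundary it vanishes as a limit of zeros by continuity. I will remark on this but keep the stated hypothesis $p_t>0$, under which no obstacle arises.
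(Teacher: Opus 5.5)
Your proof is correct and follows the paper's argument step for step. Linearity of the divergence reduces the guided equation to $\nabla\!\cdot(p_t g_t)=0$, the product rule with $\nabla p_t = p_t\nabla\log p_t$ factors out $p_t$, and positivity of $p_t$ lets you divide it away. Your remark about reading the condition on the open set $\{p_t>0\}$ is a sensible sharpening of the paper's ``on $\mathrm{supp}(p_t)$'' phrasing, and it is harmless here since $p_t>0$ everywhere.
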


\begin{proof}
By linearity of the divergence,
$\nabla \!\cdot\! (p_t \tilde v_t) = \nabla \!\cdot\! (p_t v_t) +
\nabla \!\cdot\! (p_t g_t)$, so
$\partial_t p_t + \nabla \!\cdot\! (p_t \tilde v_t) = 0$
holds if and only if $\nabla \!\cdot\! (p_t g_t) = 0$. The product
rule gives
\begin{equation}
\nabla \!\cdot\! (p_t g_t)
\;=\; p_t \nabla \!\cdot\! g_t \,+\, g_t^{\!\top} \nabla p_t
\;=\; p_t \big( \nabla \!\cdot\! g_t \,+\, g_t^{\!\top} \nabla \log p_t \big),
\label{eq:app-factored}
\end{equation}
where the second equality uses $\nabla \log p_t = \nabla p_t / p_t$,
valid since $p_t > 0$. As $p_t$ does not vanish on its support,
$\nabla \!\cdot\! (p_t g_t) = 0$ on $\mathrm{supp}(p_t)$ is
equivalent to~\eqref{eq:app-score-conservation}.
\end{proof}

\section{Velocity-score conversion}
\label{app:pf-rectified}

We provide a self-contained derivation of the conversion between the marginal score
$s_t(x) := \nabla_x \log p_t(x)$ and the marginal velocity field $v_t(x)$ for the
Gaussian probability path
\begin{equation}
X_t \;=\; \alpha_t X_1 + \sigma_t X_0,\qquad t\in[0,1],
\label{eq:app_gaussian_path}
\end{equation}
where $X_0 \sim \mathcal{N}(0,I)$ is independent of $X_1$. Let $p_t$ denote the marginal density of $X_t$,
and let $p_{t|1}(x\mid x_1)$ denote the conditional density of $X_t$ given $X_1=x_1$.

{\bf Step 1: Conditional score.}
From \eqref{eq:app_gaussian_path}, we have
\begin{equation}
p_{t|1}(x\mid x_1) \;=\; \mathcal{N}\!\big(x \mid \alpha_t x_1,\ \sigma_t^2 I\big)
\;\propto\;
\exp\!\left(-\frac{1}{2\sigma_t^2}\,\|x-\alpha_t x_1\|_2^2\right).
\label{eq:app_conditional_gaussian}
\end{equation}
Taking logs and collecting all terms independent of $x$ into a constant $C(t)$ yields
\begin{equation}
\log p_{t|1}(x\mid x_1)
= C(t) - \frac{1}{2\sigma_t^2}\,\|x-\alpha_t x_1\|_2^2.
\label{eq:app_log_conditional}
\end{equation}
Differentiating \eqref{eq:app_log_conditional} with respect to $x$ gives the conditional score
\begin{equation}
\nabla_x \log p_{t|1}(x\mid x_1)
= -\frac{1}{\sigma_t^2}\,(x-\alpha_t x_1).
\label{eq:app_conditional_score}
\end{equation}

{\bf Step 2: Fisher's identity and the marginal score.}
Using $p_t(x)=\int p_{t|1}(x\mid x_1)\,q(x_1)\,dx_1$ and differentiating under the integral sign,
\begin{align}
\nabla_x p_t(x)
&= \int \nabla_x p_{t|1}(x\mid x_1)\,q(x_1)\,dx_1 \nonumber\\
&= \int p_{t|1}(x\mid x_1)\,\nabla_x \log p_{t|1}(x\mid x_1)\,q(x_1)\,dx_1.
\label{eq:app_grad_marginal}
\end{align}
Dividing by $p_t(x)$ and recognising the posterior $p(x_1\mid x) = \frac{p_{t|1}(x\mid x_1)q(x_1)}{p_t(x)}$
yields Fisher's identity,
\begin{equation}
\nabla_x \log p_t(x)
=
\mathbb{E}\!\left[\nabla_x \log p_{t|1}(X_t\mid X_1)\ \middle|\ X_t=x\right].
\label{eq:app_fisher}
\end{equation}
Substituting \eqref{eq:app_conditional_score} into \eqref{eq:app_fisher} gives
\begin{equation}
s_t(x)
=
\mathbb{E}\!\left[-\frac{1}{\sigma_t^2}\big(X_t-\alpha_t X_1\big)\ \middle|\ X_t=x\right].
\label{eq:app_score_expectation}
\end{equation}
By \eqref{eq:app_gaussian_path}, we have the identity $X_t-\alpha_t X_1=\sigma_t X_0$, hence
\begin{equation}
s_t(x)
= -\frac{1}{\sigma_t^2}\,\mathbb{E}\!\left[\sigma_t X_0\mid X_t=x\right]
= -\frac{1}{\sigma_t}\,\mathbb{E}[X_0\mid X_t=x].
\label{eq:app_score_x0}
\end{equation}
Defining $x_{0|t}(x):=\mathbb{E}[X_0\mid X_t=x]$, we obtain the score--$x_0$ conversion
\begin{equation}
x_{0|t}(x) \;=\; -\sigma_t\, s_t(x).
\label{eq:app_x0_from_score}
\end{equation}

{\bf Step 3: Velocity in terms of conditional expectations.}
We define the marginal velocity field
\begin{equation}
v_t(x) \;:=\; \mathbb{E}[\dot{X}_t \mid X_t=x].
\label{eq:app_velocity_def}
\end{equation}
Differentiating \eqref{eq:app_gaussian_path} with respect to $t$ yields
$\dot{X}_t=\dot{\alpha}_t X_1 + \dot{\sigma}_t X_0$, and therefore
\begin{equation}
v_t(x)
= \dot{\alpha}_t\,\mathbb{E}[X_1\mid X_t=x] + \dot{\sigma}_t\,\mathbb{E}[X_0\mid X_t=x]
= \dot{\alpha}_t\,x_{1|t}(x) + \dot{\sigma}_t\,x_{0|t}(x),
\label{eq:app_velocity_predictors}
\end{equation}
where $x_{1|t}(x):=\mathbb{E}[X_1\mid X_t=x]$.

{\bf Step 4: Eliminating $x_{1|t}$ and expressing $v_t$ via the score.}
Rearranging \eqref{eq:app_gaussian_path} gives $X_1=\alpha_t^{-1}(X_t-\sigma_t X_0)$ (for $\alpha_t\neq 0$),
hence taking conditional expectations yields
\begin{equation}
x_{1|t}(x)
= \frac{1}{\alpha_t}\,x - \frac{\sigma_t}{\alpha_t}\,x_{0|t}(x).
\label{eq:app_x1_from_x0}
\end{equation}
Substituting \eqref{eq:app_x1_from_x0} into \eqref{eq:app_velocity_predictors} gives
\begin{equation}
v_t(x)
=
\frac{\dot{\alpha}_t}{\alpha_t}\,x
+
\Big(\dot{\sigma}_t - \sigma_t\frac{\dot{\alpha}_t}{\alpha_t}\Big)\,x_{0|t}(x).
\label{eq:app_v_from_x0}
\end{equation}
Finally, using \eqref{eq:app_x0_from_score} to eliminate $x_{0|t}(x)$ yields the score--velocity conversion
\begin{equation}
v_t(x)
=
\frac{\dot{\alpha}_t}{\alpha_t}\,x
-
\Big(\sigma_t\dot{\sigma}_t - \sigma_t^2\frac{\dot{\alpha}_t}{\alpha_t}\Big)\,s_t(x).
\label{eq:app_v_from_score}
\end{equation}




\section{Trajectory dynamics of guidance}
\label{sec:theory}
This appendix studies the structural dynamics of the guidance field along the sampling trajectory and provides formal justification for the two design choices in AdaMaG: the time-dependent guidance schedule and the independent tunability of the score-parallel damping parameter~$\beta$. We establish two findings:
\begin{itemize}[leftmargin=*, itemsep=2pt, topsep=4pt]
    \item \textbf{Late-stage divergence is structurally inevitable~\ref{app:structural-blowup}.} Under the standard flow parameterisation, $|\nabla_x \cdot g_t|$ blows up as $t \to 1$ at a rate determined entirely by the conditional/unconditional posterior covariance gap. The blow-up is not an artefact of training or sampling; it is a direct consequence of the manifold-supported nature of conditional and unconditional priors.
    
    \item \textbf{Guidance dynamics exhibit two distinct~\ref{app:independence-regimes}.} The decomposition of $g_t$ into score-parallel and score-orthogonal components reveals a regime transition along the trajectory: early steps are parallel-dominant, while late steps are orthogonal-dominant and host the divergence blow-up. This regime structure is not predicted by the proposition's dimensional argument but is verified empirically across the trajectory, and explains why $\beta$-damping and the schedule $\omega(t)$ address structurally distinct contributions to conservation violation.
\end{itemize}
All proofs are deferred to
Appendix~\ref{app:proofs}.

\subsection{Late-stage divergence spike.}
\label{app:structural-blowup}
The empirical spike in $|\nabla_x \cdot g_t|$ as $t \to 1$
(Figure~\ref{fig:divergence}) admits a clean structural explanation.
The flow parameterisation gives an exact identity for the divergence,
and the spike emerges from a posterior-covariance gap of the clean
data failing to vanish at a specific dimensional rate.

\begin{proposition}[Late-stage divergence behaviour]
\label{prop:divergence-blowup}
Let $x_t = \alpha_t x_1 + \sigma_t x_0$ under the Lipman linear schedule
with $x_0 \sim \mathcal{N}(0, I)$ and
$x_1 \sim p_{\mathrm{data}}$. Let $g_t := v_t^c - v_t^u$ denote the
guidance residual, and define the trace gap
\[
\Delta_t(x)
\;:=\;
\mathrm{tr}\,\mathrm{Cov}^u[X_1 \mid X_t = x]
\;-\;
\mathrm{tr}\,\mathrm{Cov}^c[X_1 \mid X_t = x],
\]
between the unconditional and conditional posterior covariances of
the clean data $X_1$ given the noisy state $X_t = x$.
Then for every $t \in (0, 1)$,
\begin{equation}\label{eq:div-blowup}
    \bigl|\nabla_x \cdot\, g_t(x)\bigr|
    \;=\;
    \frac{\alpha_t}{\sigma_t^3}\,\bigl|\Delta_t(x)\bigr|.
\end{equation}
In particular, $|\nabla_x \cdot g_t(x)| \to \infty$ as $t \to 1$
whenever $|\Delta_t(x)|$ vanishes strictly slower than $\sigma_t^3$.
Proof in Appendix~\ref{app:proofs}.
\end{proposition}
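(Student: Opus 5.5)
The plan is to write each velocity field (conditional and unconditional) in terms of the posterior mean of the clean data, differentiate that mean via a second-order Tweedie identity, and then subtract. I assume, as in Appendix~\ref{app:pf-rectified}, that $v_t^c$ and $v_t^u$ are the exact marginal velocities of the path $x_t=\alpha_t x_1+\sigma_t x_0$. The unconditional field uses $q(x_1)=p_{\mathrm{data}}$ and the conditional field uses $q(x_1)=p_{\mathrm{data}}(\cdot\mid y)$. Everything below is done for a generic prior $q$, so the same identities hold for both superscripts.

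\textbf{Step 1 (velocity via $x_{1|t}$).} From \eqref{eq:app_velocity_predictors}, eliminate $x_{0|t}$ instead of $x_{1|t}$, using $x_{0|t}=(x-\alpha_t x_{1|t})/\sigma_t$. This gives
\[
v_t(x)=\frac{\dot\sigma_t}{\sigma_t}\,x+\Big(\dot\alpha_t-\frac{\dot\sigma_t\alpha_t}{\sigma_t}\Big)x_{1|t}(x).
\]
Under the Lipman linear schedule we have $\alpha_t=t$, $\sigma_t=1-t$, $\dot\alpha_t=1$ and $\dot\sigma_t=-1$. The formula then simplifies to
\[
v_t(x)=\frac{x_{1|t}(x)-x}{1-t}=\frac{x_{1|t}(x)-x}{\sigma_t}.
\]

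\textbf{Step 2 (Jacobian of the posterior mean).} The posterior is
\[
p(x_1\mid x)\propto q(x_1)\exp\!\big(-\|x-\alpha_t x_1\|^2/(2\sigma_t^2)\big),
\]
so
\[
\nabla_x\log p(x_1\mid x)=\frac{\alpha_t}{\sigma_t^2}\,\big(x_1-x_{1|t}(x)\big),
\]
because the $x$-dependent normaliser contributes the subtracted mean. Differentiating $x_{1|t}(x)=\int x_1\,p(x_1\mid x)\,dx_1$ under the integral then gives
\[
\nabla_x x_{1|t}(x)=\frac{\alpha_t}{\sigma_t^2}\,\mathrm{Cov}[X_1\mid X_t=x],
\]
and taking the trace,
\[
\nabla_x\cdot x_{1|t}=\frac{\alpha_t}{\sigma_t^2}\,\mathrm{tr}\,\mathrm{Cov}[X_1\mid X_t=x].
\]
Combining this with Step 1 yields
\[
\nabla_x\cdot v_t=\frac{\alpha_t}{\sigma_t^3}\,\mathrm{tr}\,\mathrm{Cov}[X_1\mid X_t=x]-\frac{D}{\sigma_t}.
\]

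\textbf{Step 3 (subtract).} Apply Step 2 to the two priors and take $g_t=v_t^c-v_t^u$. The term $-D/\sigma_t$ is prior-independent and cancels, leaving
\[
\nabla_x\cdot g_t=-\frac{\alpha_t}{\sigma_t^3}\,\Delta_t(x).
\]
Taking absolute values gives \eqref{eq:div-blowup}. For the ``in particular'' clause, note that $\alpha_t\to1$ as $t\to1$. So if $|\Delta_t(x)|/\sigma_t^3\to\infty$, the right-hand side of \eqref{eq:div-blowup} diverges.

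The main obstacle is Step 2, specifically justifying differentiation under the integral sign with only mild assumptions on $q$. I would handle this by dominated convergence. For fixed $t<1$ the Gaussian factor has bounded $x$-derivatives on compact sets, provided $q$ has finite second moments. That moment condition also makes the posterior covariance finite, so $\Delta_t$ is well defined.
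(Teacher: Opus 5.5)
Your proof is correct and is essentially the paper's argument. Both rest on the second-order Tweedie identity $\nabla_x \mathbb{E}[X_1\mid X_t=x]=(\alpha_t/\sigma_t^2)\,\mathrm{Cov}[X_1\mid X_t=x]$; writing $v_t=(x_{1|t}-x)/\sigma_t$, instead of going through $v_t=a_t x-b_t\nabla_x\log p_t$, Lemma~\ref{lem:laplacian} and $b_t=-\sigma_t/\alpha_t$, is the same computation after the affine change of variables $\nabla_x\log p_t=(\alpha_t x_{1|t}-x)/\sigma_t^2$. Your Step 2 is in fact more complete, since it derives the Jacobian identity (including the normaliser's contribution) that the paper's lemma only asserts, and for $t<1$ the Gaussian likelihood gives the posterior all moments, so the finite-second-moment assumption on $q$ is not needed.
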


The sharp rise of $|\nabla_x \cdot g_t|$ as $t \to 1$ in
Figure~\ref{fig:divergence} confirms that $|\Delta_t(x)|$ decays
slower than $\sigma_t^3$ for real conditional data, consistent with
the manifold hypothesis under which conditional and unconditional
priors concentrate on distinct submanifolds and the posterior
covariance gap fails to vanish (formal treatment in
Appendix~\ref{app:proofs}).

\subsection{Decoupling of the two controls.}
\label{app:independence-regimes}
AdaMaG introduces two hyperparameters: $\beta$ damping the score-parallel
component, and $(\gamma, \omega_{\min})$ shaping the guidance schedule
$\omega(t)$. The flux contribution scales exactly with $\beta$ since
$g_t^{\perp}$ is orthogonal to the score,
\begin{equation}\label{eq:iso-exact-inline}
    \tilde{g}_t^{\,\top} \nabla_x \log p_t
    \;=\; \omega(t)\,\beta\,
          \bigl(g_t^{\,\top} \nabla_x \log p_t\bigr).
\end{equation}
The following proposition shows that modifying $\beta$ leaves the
divergence term approximately unchanged.

\begin{proposition}[Divergence insensitivity to score-parallel damping]
\label{prop:decoupling}
Let $\tilde{g_t} = \omega(t)\bigl(g_t^{\perp} + \beta\,
g_t^{\parallel}\bigr)$ be the AdaMaG guidance field
(Eq.~\ref{eq:final-update}). Under the assumption that the Jacobian
$J_{g_t}$ has eigenvalues of comparable order across directions,
the sensitivity of the divergence to $\beta$ is suppressed by the
ambient dimensionality:
\begin{equation}\label{eq:div-insensitivity}
    \frac{\bigl|\nabla_x \cdot g_t^{\parallel}\bigr|}
         {\bigl|\nabla_x \cdot g_t\bigr|}
    \;=\; O\!\left(\frac{1}{D}\right),
\end{equation}
where $D$ is the latent dimensionality. Consequently (proof in
Appendix~\ref{app:proofs}),
\begin{equation}\label{eq:div-approx}
    \nabla_x \cdot \tilde{g}_t
    \;=\; \omega(t)\,\nabla_x \cdot g_t
    \;\cdot\;
    \biggl[1 \;+\; O\!\Bigl(\frac{1-\beta}{D}\Bigr)\biggr].
\end{equation}
\end{proposition}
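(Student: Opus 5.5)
We will prove Proposition~\ref{prop:decoupling} by computing the divergence of the rank-one projection $g_t^{\parallel} = P_t g_t$, where $P_t(x) := n_t n_t^{\top}/\|n_t\|^2$ and $n_t = a_t x - v_t^c(x)$. Linearity of the divergence then gives the second claim. Because $g_t^{\perp} = g_t - g_t^{\parallel}$, we have
\[
\nabla_x\cdot\tilde g_t = \omega(t)\bigl(\nabla_x\cdot g_t - (1-\beta)\,\nabla_x\cdot g_t^{\parallel}\bigr) = \omega(t)\,\nabla_x\cdot g_t\Bigl[1 - (1-\beta)\tfrac{\nabla_x\cdot g_t^{\parallel}}{\nabla_x\cdot g_t}\Bigr].
\]
Hence \eqref{eq:div-approx} follows immediately once \eqref{eq:div-insensitivity} is established. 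Here $\omega(t)$ is constant in $x$, so it factors out. The whole task therefore reduces to bounding the ratio in \eqref{eq:div-insensitivity}.

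For that bound we first write $g_t^{\parallel} = u\,(u^{\top} g_t)$ with unit vector $u := n_t/\|n_t\|$. The product rule gives
\[
\nabla_x\cdot g_t^{\parallel} = u^{\top} J_{g_t} u + (u^{\top}g_t)\,\nabla_x\cdot u + g_t^{\top} J_u^{\top} u .
\]
Since $\|u\|=1$, we have $J_u^{\top}u = 0$, so the last term vanishes. The first term is a single Rayleigh quotient of $J_{g_t}$. In contrast, $\nabla_x\cdot g_t = \mathrm{tr}\,J_{g_t} = \sum_{i=1}^D e_i^{\top}J_{g_t}e_i$ sums $D$ such quotients over any orthonormal basis, which we take to contain $u$. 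Under the comparable-order eigenvalue assumption (read as: the diagonal entries $e_i^{\top}J_{g_t}e_i$ share a sign and magnitude, up to constants), the first term is $O(|\mathrm{tr}\,J_{g_t}|/D)$.

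The main obstacle is the curvature term $(u^{\top}g_t)\,\nabla_x\cdot u$. We would use $\nabla_x\cdot u = \bigl(\mathrm{tr}\,J_{n_t} - u^{\top}J_{n_t}u\bigr)/\|n_t\|$, where $J_{n_t} = a_t I - J_{v^c_t}$, which by \eqref{eq:velocity-score} equals $b_t\nabla^2\log p_t$. This trace also scales like $D$. We then have to argue that it is compensated: $\|n_t\|$ grows like $\sqrt{D}$ (typical-set concentration of the score, $\|\nabla\log p_t\|\sim\sqrt{D}/\sigma_t$), and the projection coefficient $u^{\top}g_t$ is one coordinate of $g_t$, of size $\|g_t\|/\sqrt{D}$ for a generic direction. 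The term then scales like $D\cdot(\|g_t\|/\sqrt D)/\sqrt D$. Comparing this with $\mathrm{tr}\,J_{g_t}$ again requires relating the magnitude of $g_t$ to its Jacobian trace via the same comparable-order heuristic applied to $J_{n_t}$ and $J_{g_t}$, since both are differences of the same network's Jacobians. Our first attempt would be to state this explicitly as part of the assumption: $\|g_t\|\,|\mathrm{tr}J_{n_t}|/\|n_t\|^2 = O(|\mathrm{tr}J_{g_t}|/D)$, justified by the isotropy assumption on the Jacobians. We would flag that without it the ratio is only heuristic. With both terms at $O(1/D)$ relative to $\nabla_x\cdot g_t$, \eqref{eq:div-insensitivity} and hence \eqref{eq:div-approx} follow.
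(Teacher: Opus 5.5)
Your linearity reduction and product-rule split of $\nabla_x\cdot g_t^{\parallel}$ follow the paper's route. The paper writes $g_t^{\parallel}=\lambda\hat n$ and splits the divergence into $\hat n^{\top}\nabla_x\lambda+\lambda\,\nabla_x\cdot\hat n$. However, two of your steps fail.

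First, your third term is transposed. Differentiating $u\,(u^{\top}g_t)$ gives
\[
\nabla_x\cdot g_t^{\parallel}=u^{\top}J_{g_t}u+(u^{\top}g_t)\,\nabla_x\cdot u+g_t^{\top}J_u\,u .
\]
The constraint $\|u\|=1$ forces $J_u^{\top}u=0$, but it does not force $J_u u=0$. Since $J_u=\|n_t\|^{-1}(I-uu^{\top})J_{n_t}$, the term equals $\|n_t\|^{-1}(g_t^{\perp})^{\top}J_{n_t}u$. This vanishes only when $u$ is an eigenvector of $J_{n_t}$, i.e.\ when score lines are straight. It is a single bilinear form, bounded by $\|g_t^{\perp}\|\,\|J_{n_t}\|_{\mathrm{op}}/\|n_t\|$, so it can join your single-direction bucket. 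The paper implicitly keeps it inside $\hat n^{\top}\nabla_x\lambda$. But it has to be bounded, not discarded.

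Second, and more seriously, the curvature term. Exactly,
\[
(u^{\top}g_t)\,\nabla_x\cdot u=\frac{\langle g_t,n_t\rangle}{\|n_t\|^{2}}\bigl(\mathrm{tr}J_{n_t}-u^{\top}J_{n_t}u\bigr).
\]
This is a trace over $D-1$ directions weighted by the parallel coefficient, not a rank-one quantity. Your supplementary hypothesis has $\|g_t\|$ where $|\langle g_t,n_t\rangle|$ belongs, so it is off by a factor of $\|n_t\|$. As written it does not bound the term, and under $x\mapsto cx$ it scales by $1/c$ while $\mathrm{tr}J_{g_t}$ is invariant.

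More fundamentally, no assumption on $J_{g_t}$ alone can control this term. Take $p_t^{c}$ and $p_t^{u}$ to be centred isotropic Gaussians with different variances. Then $n_t\propto x$ and $g_t=c_t x$ with $c_t\neq 0$. So $J_{g_t}=c_t I$ satisfies the comparable-eigenvalue hypothesis perfectly, and $\hat n=x/\|x\|$ is smooth away from $0$. Yet $g_t^{\parallel}=g_t$, the ratio equals $1$, and $\nabla_x\cdot\tilde g_t=\omega(t)\,\beta\,\nabla_x\cdot g_t$. The curvature term supplies $(D-1)c_t$ of the total $Dc_t$.

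The $O(1/D)$ claim therefore needs an explicit extra hypothesis, such as
\[
\frac{|\langle g_t,n_t\rangle|}{\|n_t\|^{2}}\,\bigl|\mathrm{tr}J_{n_t}-u^{\top}J_{n_t}u\bigr|=O\bigl(|\mathrm{tr}J_{g_t}|/D\bigr),
\]
i.e.\ that $g_t$ is already nearly score-orthogonal. This is close to assuming the conclusion. It also sits uneasily with the paper's own early-step observation that $|\nabla\cdot g_t|\approx|\nabla\cdot g_t^{\parallel}|$. The paper's proof does not supply the missing ingredient either: it dismisses $\lambda\,\nabla_x\cdot\hat n$ by appeal to the same regularity assumption, which the example above shows is insufficient. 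You were right to single out this step as the obstacle; the statement is provable only with such an added hypothesis.
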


The argument relies on a structural asymmetry: $g_t^{\parallel}$ is
rank-1 along the score normal $\hat n_t$, so its Jacobian contributes
to the trace along a single direction, whereas $g_t^{\perp}$ spans the
$(D{-}1)$-dimensional orthogonal subspace. For typical latent
dimensions ($D {=} 4 {\times} 32 {\times} 32 {=} 4{,}096$ at
$256{\times}256$ resolution), the predicted cross-talk between $\beta$
and the divergence is negligible.

\begin{figure*}[t]
  \centering
  \includegraphics[width=\textwidth]{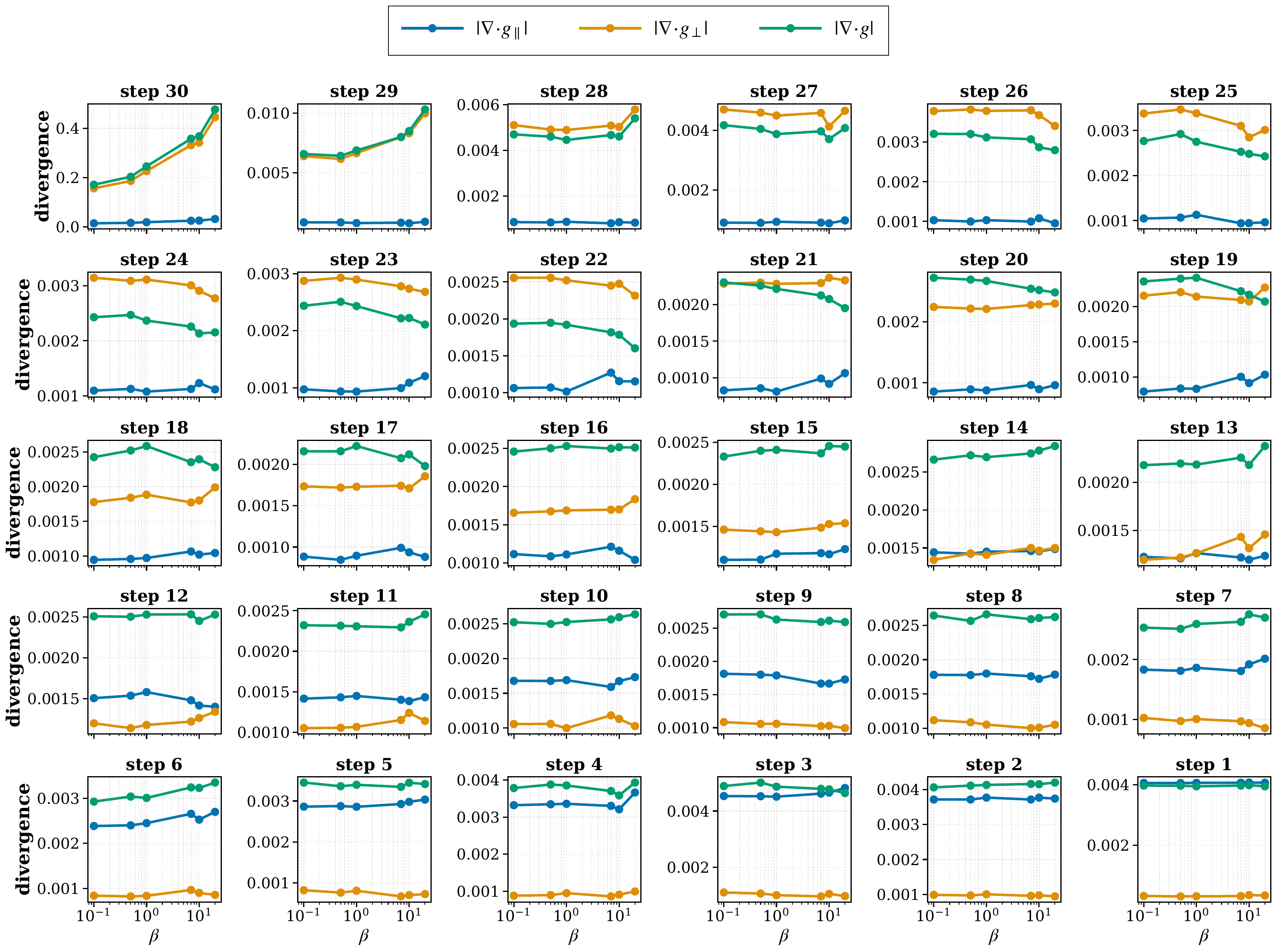}
  \caption{\textbf{Divergence components vs.\ $\beta$ along the trajectory.} Each panel shows $|\nabla \cdot g_t|$, $|\nabla \cdot g_t^{\parallel}|$, and $|\nabla \cdot g_t^{\perp}|$ as a function of $\beta \in [0.1, 20]$ at a fixed denoising step. The total divergence $|\nabla \cdot g_t|$ (green) remains flat across two orders of magnitude in $\beta$ at every step, directly verifying Proposition~\ref{prop:decoupling}. Early steps exhibit a parallel-dominant regime ($|\nabla \cdot g_t^{\parallel}| > |\nabla \cdot g_t^{\perp}|$), while late steps exhibit an orthogonal-dominant regime where the divergence spike of Proposition~\ref{prop:divergence-blowup} concentrates in $g_t^{\perp}$.}
  \label{fig:divergence-vs-beta}
\end{figure*}

\paragraph{Direct empirical verification.}
Because the comparable-eigenvalue assumption may fail in regions where
$g_t$ has anisotropic structure, we directly measure all three
quantities in Eq.~\ref{eq:div-insensitivity} along the sampling
trajectory across $\beta \in [0.1, 20]$.
Figure~\ref{fig:divergence-vs-beta} reports
$|\nabla \cdot g_t|$, $|\nabla \cdot g_t^{\parallel}|$, and
$|\nabla \cdot g_t^{\perp}|$ at every denoising step. The total
divergence $|\nabla \cdot g_t|$ remains essentially flat across two
orders of magnitude in $\beta$ at every step, directly confirming the
operational claim of Proposition~\ref{prop:decoupling}: $\beta$ can be
tuned with negligible effect on probability conservation.

\paragraph{Two regimes along the trajectory.}
Beyond verifying the decoupling, the per-step decomposition reveals an
interesting structural feature of the guidance field. The dominance
between the parallel and orthogonal divergence components shifts
systematically along the trajectory:

\begin{itemize}[leftmargin=*, itemsep=2pt, topsep=2pt]
    \item \textbf{Parallel-dominant regime} (early steps, $t$ near 0):
    $|\nabla \cdot g_t^{\parallel}| > |\nabla \cdot g_t^{\perp}|$, with
    $|\nabla \cdot g_t| \approx |\nabla \cdot g_t^{\parallel}|$ in the
    earliest steps. Guidance acts predominantly along the score
    direction.
    
    \item \textbf{Orthogonal-dominant regime} (late steps, $t$ near 1):
    $|\nabla \cdot g_t^{\perp}|$ dominates and exhibits the late-stage
    spike of Proposition~\ref{prop:divergence-blowup}, while
    $|\nabla \cdot g_t^{\parallel}|$ remains small.
\end{itemize}

This regime structure has a concrete design implication: because the
late-stage divergence spike lives in $g_t^{\perp}$, $\beta$-damping
(which acts on $g_t^{\parallel}$) cannot suppress it. The separate
schedule mechanism $\omega(t)$ is therefore not redundant with $\beta$
but addresses a structurally distinct contribution to conservation
violation. The factorial ablation over $(\beta, \gamma)$ in
Section~\ref{sec:ablation} confirms that their improvements to FID and
saturation are approximately additive.

\section{Proofs}
\label{app:proofs}

Here we provide complete proofs of the results stated in
Section~\ref{sec:theory}. 

\subsection{Laplacian of the log-density via Tweedie}
\label{app:lemma-laplacian}
First we obtain an expression of the laplacian log density required for proposition~\ref{prop:divergence-blowup}.
We take Tweedie's formula as the starting point:
for the Gaussian convolution model $X_t = \alpha_t X_1 + \sigma_t X_0$
with $X_0 \sim \mathcal{N}(0, I)$ independent of $X_1 \sim p^k$
($k \in \{c, u\}$), the marginal density $p_t^k$ satisfies, at every $x$
with $p_t^k(x) > 0$,
\begin{equation}
\label{eq:tweedie-score}
\nabla_x \log p_t^k(x)
\;=\;
\frac{\alpha_t \,\mathbb{E}^k[X_1 \mid X_t = x] - x}{\sigma_t^2}.
\end{equation}

\begin{lemma}[Laplacian of the log-density]
\label{lem:laplacian}
Under the setup above,
\begin{equation}
\label{eq:tweedie-laplacian}
\Delta_x \log p_t^k(x)
\;=\;
\frac{\alpha_t^2 \,\mathrm{tr}\,\mathrm{Cov}^k[X_1 \mid X_t = x] - D \sigma_t^2}{\sigma_t^4}.
\end{equation}
\end{lemma}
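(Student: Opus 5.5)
Starting from Tweedie's formula \eqref{eq:tweedie-score}, take the divergence of both sides in $x$. The term $-x/\sigma_t^2$ has divergence $-D/\sigma_t^2$. Once we have the Jacobian of the posterior mean $m^k(x) := \mathbb{E}^k[X_1 \mid X_t = x]$, the claim follows from
\[
\Delta_x \log p_t^k(x) = \frac{\alpha_t\,\mathrm{tr}\,\nabla_x m^k(x) - D}{\sigma_t^2}.
\]
The plan therefore reduces to the second-order Tweedie identity
\[
\nabla_x m^k(x) = \frac{\alpha_t}{\sigma_t^2}\,\mathrm{Cov}^k[X_1 \mid X_t = x].
\]
Taking the trace of this identity and substituting gives
$(\alpha_t^2 \mathrm{tr}\,\mathrm{Cov}^k/\sigma_t^2 - D)/\sigma_t^2$, which is exactly \eqref{eq:tweedie-laplacian}.

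To prove the Jacobian identity, write the posterior explicitly as
\[
\pi^k(x_1 \mid x) = \frac{\exp\!\big(-\|x-\alpha_t x_1\|^2/(2\sigma_t^2)\big)\, p^k(x_1)}{Z(x)},
\qquad
m^k(x) = \int x_1\, \pi^k(x_1\mid x)\,dx_1 .
\]
Differentiate $\log \pi^k$ in $x$. The numerator contributes $-(x-\alpha_t x_1)/\sigma_t^2$, and $-\nabla_x \log Z(x)$ equals minus the posterior average of that same quantity. The $x$-terms cancel, so
\[
\nabla_x \log \pi^k(x_1\mid x) = \frac{\alpha_t}{\sigma_t^2}\big(x_1 - m^k(x)\big).
\]
Then
\[
\partial_{x_j} m^k_i = \int x_{1,i}\, \partial_{x_j}\pi^k \,dx_1 = \frac{\alpha_t}{\sigma_t^2}\,\mathbb{E}^k\!\big[X_{1,i}(X_{1,j}-m^k_j) \,\big|\, X_t = x\big],
\]
which is the $(i,j)$ covariance entry. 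This is the usual exponential-family fact that the derivative of the mean with respect to the natural parameter $\alpha_t x/\sigma_t^2$ is the covariance.

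The main obstacle is rigor rather than algebra: we must justify differentiating under the integral sign twice, and the posterior must have finite second moments so that $\mathrm{Cov}^k$ exists. I would handle this with the Gaussian factor, which gives integrands dominated by polynomial times Gaussian tails in $x_1$, uniformly for $x$ in compact sets. This works under the mild assumption that $p^k$ has finite second moment, or more generally for any probability measure once $\alpha_t>0$ and $\sigma_t>0$. The same bound also covers the derivation of \eqref{eq:tweedie-score} itself, which the paper takes as given. Finally, I would note that the result holds for each $k\in\{c,u\}$ separately and pointwise wherever $p_t^k(x)>0$, and that positivity is automatic for $t\in(0,1)$ by Gaussian convolution.
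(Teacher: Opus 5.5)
Your proposal is correct and follows the same route as the paper: differentiate Tweedie's formula \eqref{eq:tweedie-score}, invoke the second-order identity $\nabla_x \mathbb{E}^k[X_1 \mid X_t = x] = (\alpha_t/\sigma_t^2)\,\mathrm{Cov}^k[X_1 \mid X_t = x]$, and take the trace. The difference is that you actually derive that identity from the posterior log-derivative and justify differentiating under the integral (correctly noting that $\alpha_t,\sigma_t>0$ suffices for any prior measure), whereas the paper simply asserts that it follows from Tweedie's formula.
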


\begin{proof}
Differentiating~\eqref{eq:tweedie-score} with respect to $x$ gives the
Hessian of the log-marginal:
\[
\nabla_x^2 \log p_t^k(x)
\;=\; \frac{\alpha_t \,\nabla_x \mathbb{E}^k[X_1 \mid X_t = x] - I}{\sigma_t^2}.
\]
From Tweedie's equation, it follows that
\[
\nabla_x \mathbb{E}^k[X_1 \mid X_t = x]
\;=\;
\frac{\alpha_t}{\sigma_t^2}\,\mathrm{Cov}^k[X_1 \mid X_t = x],
\]
so substituting yields the matrix identity
\[
\nabla_x^2 \log p_t^k(x)
\;=\;
\frac{\alpha_t^2 \,\mathrm{Cov}^k[X_1 \mid X_t = x]}{\sigma_t^4}
\;-\; \frac{I}{\sigma_t^2}.
\]
Taking the trace of both sides yields~\eqref{eq:tweedie-laplacian} where $D$ is the dimension, and $\Delta_x$ is the Laplacian.
\end{proof} 

\subsection{Proof of Proposition~\ref{prop:divergence-blowup}}
\noindent\textbf{Proposition~\ref{prop:divergence-blowup} (Late-stage divergence behaviour).}\;
\textit{Let $X_t = \alpha_t X_1 + \sigma_t X_0$ with $X_0 \sim \mathcal{N}(0, I)$
independent of $X_1 \sim p_{\mathrm{data}}$, under the Lipman linear schedule
$\alpha_t = t,\ \sigma_t = 1-t$. Let $g_t := v_t^c - v_t^u$ denote the
guidance residual, and define the trace gap}
\[
\Delta_t(x)
\;:=\;
\mathrm{tr}\,\mathrm{Cov}^u[X_1 \mid X_t = x]
\;-\;
\mathrm{tr}\,\mathrm{Cov}^c[X_1 \mid X_t = x],
\]
\textit{where the conditional and unconditional posterior covariances are taken
under $p_{\mathrm{data}}(\cdot \mid y)$ and $p_{\mathrm{data}}$ respectively.
Then for every $t \in (0, 1)$ and every $x$ in the joint support of $p_t^c$
and $p_t^u$,}
\[
\bigl|\nabla_x \cdot g_t(x)\bigr|
\;=\;
\frac{\alpha_t}{\sigma_t^3}\,\bigl|\Delta_t(x)\bigr|.
\]
\textit{In particular, $|\nabla_x \cdot g_t(x)| \to \infty$ as $t \to 1$ whenever
$|\Delta_t(x)|$ vanishes strictly slower than $\sigma_t^3$.}

\begin{proof}
The proof has three steps: reduce the divergence to a Laplacian gap of
log-densities (Step 1), apply Lemma~\ref{lem:laplacian} to express this
gap in terms of posterior covariances (Step 2), and combine with the
schedule-specific value of $b_t$ (Step 3).

\medskip
\noindent\textbf{Step 1: Reduce divergence to a Laplacian gap.}\;
From the velocity--score relation~\eqref{eq:velocity-score},
$v_t^k(x) = a_t x - b_t \nabla_x \log p_t^k(x)$ for $k \in \{c, u\}$.
The $a_t x$ terms are identical and cancel under subtraction, giving
\[
g_t(x)
\;=\; -b_t \bigl[\nabla_x \log p_t^c(x) - \nabla_x \log p_t^u(x)\bigr].
\]
Since $b_t$ depends only on $t$, the divergence commutes with the scalar
factor:
\begin{equation}
\label{eq:divergence-laplacian-gap}
\nabla_x \cdot g_t(x)
\;=\; -b_t \bigl[\Delta_x \log p_t^c(x) - \Delta_x \log p_t^u(x)\bigr].
\end{equation}

\medskip
\noindent\textbf{Step 2: Express the Laplacian gap via posterior covariances.}\;
Applying Lemma~\ref{lem:laplacian} to both $k = c$ and $k = u$ gives
\[
\Delta_x \log p_t^k(x)
\;=\;
\frac{\alpha_t^2 \,\mathrm{tr}\,\mathrm{Cov}^k[X_1 \mid X_t = x] - D \sigma_t^2}{\sigma_t^4}.
\]
The dimensional terms $-D\sigma_t^2 / \sigma_t^4$ are identical for $c$
and $u$ and cancel under subtraction:
\begin{equation}
\label{eq:laplacian-gap-cov}
\Delta_x \log p_t^c(x) - \Delta_x \log p_t^u(x)
\;=\;
-\frac{\alpha_t^2}{\sigma_t^4}\,\Delta_t(x).
\end{equation}

\medskip
\noindent\textbf{Step 3: Compute $b_t$ and combine.}\;
Under the linear Lipman schedule $\alpha_t = t$, $\sigma_t = 1-t$, a
direct computation gives
\[
b_t
\;=\; \frac{\dot\sigma_t \sigma_t \alpha_t - \dot\alpha_t \sigma_t^2}{\alpha_t}
\;=\; -\frac{\sigma_t}{\alpha_t}.
\]
Substituting~\eqref{eq:laplacian-gap-cov} and this value of $b_t$
into~\eqref{eq:divergence-laplacian-gap},
\[
\nabla_x \cdot g_t(x)
\;=\; -\Bigl(-\frac{\sigma_t}{\alpha_t}\Bigr)
       \cdot \Bigl(-\frac{\alpha_t^2}{\sigma_t^4}\,\Delta_t(x)\Bigr)
\;=\; -\frac{\alpha_t}{\sigma_t^3}\,\Delta_t(x).
\]
Taking absolute values yields the claimed identity.
\end{proof}

\paragraph{Late-stage blow-up.}
The final claim follows directly from the identity above. As $t \to 1$,
$\alpha_t \to 1$ and $\sigma_t \to 0$, so $\alpha_t / \sigma_t^3 \to \infty$
at rate $\Theta(1/(1-t)^3)$. If $|\Delta_t(x)|$ vanishes strictly slower
than $\sigma_t^3$, then $|\nabla_x \cdot g_t(x)|$ diverges.

Empirically from~\ref{fig:divergence} and across seeds we observe the same pattern as demonstrated in proposition.

\subsection{Proof of Proposition~\ref{prop:decoupling}}
\noindent\textbf{Proposition~\ref{prop:decoupling} (Divergence insensitivity to score-parallel damping).}\;
\textit{Let $\tilde g_t$ denote the AdaMaG-modified guidance defined in
Eq.~\eqref{eq:final-update}, and let $g_t$ be the raw guidance with
orthogonal decomposition $g_t = g_t^{\parallel} + g_t^{\perp}$ relative
to the score-normal direction $n_t$. Under the assumption that
$J_{g_t}$ has eigenvalues of comparable order across directions and
that the score direction $\hat n = n_t/\|n_t\|$ varies smoothly,}
\[
    \frac{\bigl|\nabla_x \cdot g_t^{\parallel}\bigr|}
         {\bigl|\nabla_x \cdot g_t\bigr|}
    \;=\; O\!\left(\tfrac{1}{D}\right),
    \qquad
    \nabla_x \cdot \tilde{g}_t
    \;=\; \omega(t)\,\nabla_x \cdot g_t
    \,\Bigl[1 + O\!\bigl(\tfrac{1-\beta}{D}\bigr)\Bigr],
\]
\textit{where $D$ is the latent dimensionality.}

\begin{proof}
By linearity of the divergence operator and the identity
$g_t = g_t^{\parallel} + g_t^{\perp}$,
\begin{equation}\label{eq:div-decomp-app}
    \nabla_x \cdot \tilde{g}_t
    \;=\; \omega(t)\bigl[
          \nabla_x \cdot g_t
          \;-\; (1-\beta)\,\nabla_x \cdot g_t^{\parallel}
    \bigr],
\end{equation}
so $\partial(\nabla_x\!\cdot\!\tilde g_t)/\partial\beta
   = \omega(t)\,\nabla_x\!\cdot\!g_t^{\parallel}$ and the sensitivity
to $\beta$ is controlled by $\nabla_x\!\cdot\!g_t^{\parallel}$ alone.

Writing the parallel component as
$g_t^{\parallel}(x) = \lambda(x)\,\hat n(x)$ with
$\lambda(x) = \langle g_t(x), n_t(x)\rangle / \|n_t(x)\|$ and
$\hat n = n_t / \|n_t\|$, the product rule yields
\begin{equation}\label{eq:jacobian-parallel}
    J_{g_t^{\parallel}}
    \;=\; \hat n\,(\nabla_x \lambda)^{\!\top}
    \;+\; \lambda\, J_{\hat n},
\end{equation}
whose trace is the divergence
\begin{equation}\label{eq:div-parallel}
    \nabla_x \cdot g_t^{\parallel}
    \;=\; \hat n^{\!\top} \nabla_x \lambda
    \;+\; \lambda\,\nabla_x \cdot \hat n.
\end{equation}
The first term is a single directional derivative along $\hat n$.
The second term, $\lambda\,\nabla_x\!\cdot\!\hat n$, can in
principle aggregate contributions from all $D$ coordinate
directions; under the stated regularity, however, no single
direction (in particular $\hat n$) carries disproportionate
weight in $J_{g_t}$, so the trace of the rank-one part
$\hat n(\nabla_x\lambda)^{\!\top}$ contributes $O(1)$ to a sum
$\nabla_x\!\cdot\!g_t = \operatorname{tr}(J_{g_t})$ of $D$
comparably-sized eigenvalues.  This gives
\[
    \frac{|\nabla_x \cdot g_t^{\parallel}|}
         {|\nabla_x \cdot g_t|}
    \;=\; O(1/D).
\]
Substituting into~\eqref{eq:div-decomp-app} and factoring out
$\nabla_x\!\cdot\!g_t$ yields the stated approximation.
For latent dimensions typical of flow-based generative models
($D = 4{,}096$ at $256{\times}256$; $D = 16{,}384$ at
$512{\times}512$), this ratio is negligible, confirming that
the divergence is effectively invariant to changes in $\beta$.
\end{proof}

\section{Connections to related works}
\label{app:theoretical_relevance}
\paragraph{Connection to APG.}
APG~\citep{sadat2024eliminating} empirically observed that attenuating the tangential component of the CFG update reduces saturation, with effects most visible under $x_0$-prediction and a decomposition defined relative to $\hat{x}_0^{\,\mathrm{cond}}$. We make three explicit advances over this empirical observation.

\emph{First, we identify the underlying geometric principle.} APG's tangential attenuation is the parameterisation-specific manifestation of a broader conservation argument (Eq.~\ref{eq:score-conservation}) in which the score-parallel flux across iso-density surfaces governs off-manifold drift. Where APG operates on a heuristic decomposition, our framework derives the same construction from probability conservation.

\emph{Second, our formulation is parameterisation-invariant.} Working directly with the score $s_t = \nabla_x \log p_t$ rather than $\hat{x}_0$ or $\varepsilon$, the relevant projection is uniquely determined. APG arises as a special case of Eq.~\ref{eq:manifold-tangent-condition}: under $x_0$-prediction with $a_t = 1/t$, the scalar $a_t$ factors out and the projection reduces to APG's decomposition relative to $\hat{x}_0^{\,\mathrm{cond}}$. In other parameterisations, this scalar reduction does not occur and the score-line decomposition diverges from a clean-image decomposition.

\emph{Third, our framework predicts where APG's effect weakens.} In $\varepsilon$-prediction, the score relates to the noise predictor by $s_\theta = -\varepsilon_\theta/\sigma_t$, so the geometric object that should be attenuated is rescaled by $\sigma_t$ and the empirical benefit of tangential attenuation is masked. In DDPM/VP-style models, our framework predicts that working in score-space recovers the equivalent geometric attenuation that APG cannot directly access in $\varepsilon$-space.

Beyond these advances, AdaMaG addresses a second contribution to conservation violation, the divergence term in Eq.~\ref{eq:score-conservation}, which APG does not consider. The schedule $\omega(t)$ targets this term independently of the score-parallel attenuation, making AdaMaG more complete in its coverage of conservation violations.

{\bf Connection to Rectified-CFG++.}
Rectified-CFG++\citep{saini2025rectified} adopts a predictor-corrector sampling strategy in which each guided (predictor) update is followed by a corrective step that re-applies the base generative dynamics. Interpreted through our lens, this correction acts as an implicit manifold update inserted between successive guided updates. Consequently, the method approximately enforces our deviation constraint in an alternating manner such that after each guided step that may introduce off-manifold drift, the subsequent corrector step reduces the accumulated deviation by steering the iterate back toward the model's learned trajectory. In effect, this achieves \(n_t(x)=0\) every other step. 

\section{Baselines}
\label{app:baselines}
{\bf Rect-CFG++.}
Rect-CFG++ is implemented as a predictor--corrector scheme and therefore requires two network evaluations per integration update (one prediction and one correction), i.e., roughly \(2\times\) NFE compared to single-evaluation baselines. To keep comparisons compute-matched, we fix the total NFE budget across methods and split it evenly between predictor and corrector steps. Concretely, with a 30-NFE budget we run 15 predictor updates and 15 corrector updates, yielding the same total number of model calls as the 30-step Euler baselines.

{\bf APG adaptation to rectified flows.}
APG was originally introduced for diffusion models with an explicit \(x_0\) (clean-image) prediction. We adapt APG to rectified flows by first forming a one-step estimate of \(x_0\) from the current state \(x_t\) and the model velocity at the same noise level. We then apply the standard APG update in \(x_0\)-space using the original coefficients from the diffusion formulation. Finally, we map the updated \(\hat{x}_0\) back to an equivalent rectified-flow update by taking a single Euler step that returns to the same noise level \(t\), and define the corresponding velocity that realizes this update.

\section{Custom prompts}

\begin{tcolorbox}[
  breakable,
  colback=white,
  colframe=black,
  boxrule=0.6pt,
  arc=2pt,
  left=6pt,right=6pt,top=6pt,bottom=6pt,
  title=Prompt list (In order of appearance)
]
\begin{enumerate}[leftmargin=*, label=\textbf{P\arabic*:}, itemsep=3pt, topsep=2pt]
  \item A ballerina mid-pirouette in a flowing white tutu with motion blur on the fabric and sharp focus on her concentrated expression
  \item Monk meditating in a serene temple with incense smoke curling and golden Buddha statue behind
  \item Tennis player serving with ball tossed high and body arched in powerful motion
  \item A master sushi chef slicing fresh tuna with a long knife at a traditional counter while perfect nigiri pieces are arranged on a wooden board with wasabi and pickled ginger
  \item Steam rising from a copper Turkish coffee pot on an ornate brass tray with ceramic cups painted in cobalt blue patterns
  \item A white horse galloping through shallow surf at sunrise with spray catching pink and gold light and mane flowing
  \item A cormorant drying its wings on a wooden post with water droplets and soft morning backlight
  \item A vintage sailboat with weathered wooden hull and cream canvas sails gliding across turquoise Mediterranean waters at golden hour
  \item A woman with freckles and copper hair laughing in golden hour light with wind catching loose strands
  \item A woman in a white linen shirt reading on a sun-dappled balcony with coffee steaming beside her
  \item A ballerina tying her pointe shoe ribbons with focus and determination visible in her expression
  \item A florist arranging peonies with green-stained fingers and apron covered in petals
  \item Close-up of a woman applying red lipstick in a vintage compact mirror
  \item A barista creating latte art with intense focus and steam rising around her
  \item A woman wrapped in cashmere reading by firelight with snow falling outside the window
  \item A dancer stretching at a barre with morning sun streaming through tall windows
  \item A solitary cypress tree on a Tuscan hilltop at golden hour with rolling green hills and distant farmhouses
  \item An elderly craftsman's hands shaping wet clay on a pottery wheel with earth tones and soft window light
  \item A woman in white linen walking barefoot through shallow tide pools with reflections of clouds and blue sky
  \item A gondolier in striped shirt steering through a narrow Venice canal with laundry hanging above and golden walls
  \item A woman hiking at sunrise with wind in her hair and mountains behind her
  \item A woman floating in a turquoise cenote with white dress billowing underwater
  \item Close-up of hands holding a worn leather journal and fountain pen
  \item An archer drawing her bow with intense focus and forest background
  \item A woman running through a sunflower field with arms outstretched and joy on her face
  \item A traditional wooden rowboat half submerged in clear lake water with autumn trees reflected around it
  \item A woman with wind-blown dark hair on a cliff overlooking a turbulent sea in a camel wool coat
  \item A woman with auburn hair in a rust-colored sweater holding a steaming mug on a foggy morning porch
  \item A sommelier in crisp white shirt decanting wine by candlelight
  \item A woman running through a sunflower field with arms outstretched and joy on her face
  \item Antique maps spread on a table with a brass compass magnifying glass and leather case
  \item A barn owl in flight at dusk with wings spread and soft feathers and golden field below
  \item A falconer with hawk landing on her gloved hand against stormy sky
  \item A toucan with massive bill in orange yellow and black tossing a berry in the air against rainforest backdrop
  \item A Wilson's bird-of-paradise displaying with turquoise crown curled tail feathers and cape of iridescent green
  \item A Bengal cat with marbled coat lounging on a velvet cushion in deep teal with dust motes in sunlight
  \item A hyacinth macaw pair preening each other with cobalt blue feathers and bright yellow eye rings against jungle green
  \item A stop sign saying all way underneath it
  \item A white horse galloping through shallow surf at splashing water as running
  \item A giraffe drinking water by the lake
\end{enumerate}
\end{tcolorbox}

\label{app:prompts}

\section{Additional results}
We provide further qualitative comparisons between AdaMaG and standard CFG at their respective optimal guidance scales.
Each pair is generated under the same prompt and seed, illustrating how AdaMaG affects visual fidelity and artifact frequency across a larger set of examples.

\begin{figure*}[!htbp]
  \centering
  \appimgTwo{01}\hfill\appimgTwo{02}\\[2pt]
  \appimgTwo{03}\hfill\appimgTwo{04}\\[2pt]
  \appimgTwo{05}\hfill\appimgTwo{06}\\[2pt]
  \appimgTwo{07}\hfill\appimgTwo{08}
  \caption{Additional qualitative comparisons between AdaMaG and CFG at their respective optimal guidance scales (part 1/4).}
  \label{fig:appendix:additional_results_1}
\end{figure*}

\begin{figure*}[t]
  \centering
  \appimgTwo{09}\hfill\appimgTwo{10}\\[2pt]
  \appimgTwo{11}\hfill\appimgTwo{12}\\[2pt]
  \appimgTwo{13}\hfill\appimgTwo{14}\\[2pt]
  \appimgTwo{15}\hfill\appimgTwo{16}
  \caption{Additional qualitative comparisons between AdaMaG and CFG at their respective optimal guidance scales (part 2/4).}
  \label{fig:appendix:additional_results_2}
\end{figure*}

\begin{figure*}[t]
  \centering
  \appimgTwo{17}\hfill\appimgTwo{18}\\[2pt]
  \appimgTwo{19}\hfill\appimgTwo{20}\\[2pt]
  \appimgTwo{21}\hfill\appimgTwo{22}\\[2pt]
  \appimgTwo{23}\hfill\appimgTwo{24}
  \caption{Additional qualitative comparisons between AdaMaG and CFG at their respective optimal guidance scales (part 3/4).}
  \label{fig:appendix:additional_results_3}
\end{figure*}

\begin{figure*}[t]
  \centering
  \appimgTwo{25}\hfill\appimgTwo{26}\\[2pt]
  \appimgTwo{27}\hfill\appimgTwo{28}\\[2pt]
  \appimgTwo{29}\hfill\appimgTwo{30}\\[2pt]
  \appimgTwo{31}\hfill\appimgTwo{34}\\[2pt]\begin{subfigure}[t]{0.47\textwidth}\end{subfigure}
  
  \begin{subfigure}[t]{0.47\textwidth}\end{subfigure}
  \caption{Additional qualitative comparisons between AdaMaG and CFG at their respective optimal guidance scales (part 4/4).}
  \label{fig:appendix:additional_results_4}
\end{figure*}

\FloatBarrier
\clearpage


\end{document}